\newcommand{\norm}[1]{\left\Vert#1\right\Vert}
\newcommand{\tr}[1]{\textrm{tr}\left(#1\right)}
\newcommand{\Real}{\mathbb{R}}
\newcommand{\R}{\mathbb{R}}
\newcommand{\eps}{\varepsilon}
\newcommand{\df}[1]{\frac{\partial}{\partial#1}}
\newcommand{\argmax}{\operatornamewithlimits{argmax}}
\newcommand{\OO}{\mathcal{O}}
\newcommand{\vp}{v_p^{\perp}}
\newcommand{\ap}{\alpha_p}
\newcommand{\bp}{\beta_p}
\newcommand{\xj}{(x_{i_j}-\bar{x}_i)}
\newcommand{\zj}{(z_{i_j}-\bar{z}_i)}
\newcommand{\sjk}{\sum_{j=1}^{k(i)}}
\newcommand{\spd}{\sum_{p=1}^{d}}
\journalname{Machine Learning }
\begin{document}

\title{Local Procrustes for Manifold Embedding:\\ A Measure of Embedding Quality and Embedding
Algorithms
\thanks{This research was supported in part by Israeli Science
Foundation grant.} }

\titlerunning{Local Procrustes}        

\author{Yair Goldberg         \and
        Ya'acov Ritov
}


\institute{Y. Goldberg
\at  Department of Statistics, The Hebrew University, 91905 Jerusalem, Israel\\
              \email{yair.goldberg@mail.huji.ac.il}           
           \and
           Y. Ritov\\
\email{yaacov.ritov@huji.ac.il} }
\date{Received: date / Accepted: date}

\maketitle

\begin{abstract}
We present the Procrustes measure, a novel measure based on
Procrustes rotation that enables quantitative comparison of the
output of manifold-based embedding algorithms (such as
LLE~\citep{LLE} and Isomap~\citep{ISOMAP}). The measure also serves
as a natural tool when choosing dimension-reduction parameters. We
also present two novel dimension-reduction techniques that attempt
to minimize the suggested measure, and compare the results of these
techniques to the results of existing algorithms. Finally, we
suggest a simple iterative method that can be used to improve the
output of existing algorithms.

 \keywords{Dimension reducing \and Manifold learning \and Procrustes analysis, \and Local
PCA \and Simulated annealing}
\end{abstract}

\section{Introduction}

Technological advances constantly improve our ability to collect and
store large sets of data. The main difficulty in analyzing such
high-dimensional data sets is, that the number of observations
required to estimate functions at a set level of accuracy grows
exponentially with the dimension. This problem, often referred to as
the curse of dimensionality, has led to various techniques that
attempt to reduce the dimension of the original data.

Historically, the main approach to dimension reduction is the linear
one. This is the approach used by principle component analysis (PCA)
and factor analysis \citep[see][for both]{Mardia}. While these
algorithms are largely successful, the assumption that a linear
projection describes the data well is incorrect for many data sets.
A more realistic assumption than that of an underlying linear
structure is that the data is on, or next to, an embedded manifold
of low dimension in the high-dimensional space. Here a manifold is
defined as a topological space that is locally equivalent to a
Euclidean space. Locally, the manifold can be estimated by linear
approximations based on small neighborhoods of each point. Many
algorithms were developed to perform embedding for manifold-based
data sets, including the algorithms suggested
by~\citet{LLE,ISOMAP,belkin,LTSA,HessianEigenMap,Weinberger,dollar}.
Indeed, these algorithms have been shown to succeed even where the
assumption of linear structure does not hold. However, to date there
exists no good tool to estimate the quality of the result of these
algorithms.

Ideally, the quality of an output embedding could be judged based on
a comparison to the structure of the original manifold. Indeed, a
measure based on the idea that the manifold structure is known to a
good degree was recently suggested by~\citet{dollar}. However, in
the general case, the manifold structure is not given, and is
difficult to estimate accurately. As such ideal measures of quality
cannot be used in the general case, an alternate quantitative
measure is required.

In this work we suggest an easily computed function that measures
the quality of any given embedding. We believe that a faithful
embedding is an embedding that preserves the structure of the local
neighborhood of each point. Therefore the quality of an embedding is
determined by the success of the algorithm in preserving these local
structures. The function we present, based on the Procrustes
analysis, compares each neighborhood in the high-dimensional space
and its corresponding low-dimensional embedding. Theoretical results
regarding the convergence of the proposed measure are presented.

We further suggest two new algorithms for discovering the
low-dimensional embedding of a high-dimensional data set, based on
minimization of the suggested measure function. The first
algorithm performs the embedding one neighborhood at a time. This
algorithm is extremely fast, but may suffer from incremental
distortion. The second algorithm, based on simulated
annealing~\citep{kirkpatrick}, performs the embedding of all local
neighborhoods simultaneously. A simple iterative procedure that
improves on an existing output is also presented.

The paper is organized as follows. The problem of manifold learning
is presented in Section~\ref{sec:problem}. A discussion regarding
the quality of embeddings in general and the suggested measure of
quality are presented in Section~\ref{sec:Faithful}. The embedding
algorithms are presented in Section~\ref{sec:algo}. In
Section~\ref{sec:NumericalExamples} we present numerical examples.
All proofs are presented in the Appendix.

\section{Manifold-learning problem setting and definitions}\label{sec:problem}
In this section we provide a formal definition of the
manifold-learning dimension-reduction problem.

Let $\mathcal{M}$ be a $d$-dimensional manifold embedded in
$\Real^q$. Assume that a sample is taken from $\mathcal{M}$. The
goal of manifold-learning is to find a \emph{faithful} embedding of
the sample in $\Real^d$. The assumption that the sample is taken
from a manifold is translated to the fact that small distances on
the manifold $\mathcal{M}$ can be approximated well by the Euclidian
distance in $\Real^q$. Therefore, to find an embedding, one first
needs to approximate the structure of small neighborhoods on the
manifold using the Euclidian metric in $\Real^q$. Then one must find
a unified embedding of the sample in $\R^d$ that preserves the
structure of local neighborhoods on $\mathcal{M}$.

In order to adhere to this scheme, we need two more assumptions.
First, we assume that $\mathcal{M}$ is \emph{isometrically
embedded} in $\R^q$. By definition, an isometric mapping between
two manifolds preserves the inner product on the tangent bundle at
each point. Less formally, this means that distances and angles
are preserved by the mapping. This assumption is needed because we
are interested in an embedding that everywhere preserves the local
structure of distances and angles between neighboring points. If
this assumption does not hold, close points on the manifold may
originate from distant points in $\R^d$ and vice versa. In this
case, the structure of the local neighborhoods on the manifold
will not reveal the structure of the original neighborhoods in
$\R^d$. We remark here that the assumption that the embedding is
isometric is strong but can be relaxed. One may assume instead
that the embedding mapping is conformal. This means that the inner
products on the tangent bundle at each point are preserved up to a
scalar $c$ that may change continuously from point to point. Note
that the class of isometric embeddings is included in the class of
conformal embeddings. While our main discussion regards isometric
embeddings, we will also discuss the conformal embedding problem,
which is the framework of algorithms such as
c-Isomap~\citep{cIsomap} and Conformal Embeddings
(CE)~\citep{ShaExtensionSpectralMethods}.

The second assumption is that the sample taken from the manifold
$\mathcal{M}$ is \emph{dense}. We need to prevent the situation in
which the local neighborhood of a point, which is computed according
to the Euclidian metric in $\R^q$, includes distant geodesic points
on the manifold. This can happen, for example, if the manifold is
twisted. The result of having distant geodesic points in the same
local neighborhood is that these distant points will be embedded
close to each other instead of preserving the true geodesic distance
between them.

To define the problem formally, we require some definitions.

The \emph{neighborhood} of a point $x_i\in\mathcal{M}$ is a set of
points $X_i$ that consists of points close to $x_i$ with respect
to the Euclidean metric in $\R^q$. For example, neighbors can be  $K$-nearest neighbors or all the points in an $\eps$-ball around $x_i$.

The \emph{minimum radius of curvature}
$r_0=r_0(\mathcal{M})$ is defined as follows:
\begin{equation*}
    \frac{1}{r_0}=\max_{\gamma,t} \left\{ \norm{\ddot{\gamma(t)}}\right\}
\end{equation*}
where $\gamma$ varies over all unit-speed geodesics in
$\mathcal{M}$ and $t$ is in a domain of $\gamma$.

The \emph{minimum branch separation} $s_0=s_0(\mathcal{M})$ is
defined as the largest positive number for which
$\norm{x-\tilde{x}}<s_0$ implies $d_\mathcal{M}(x,\tilde{x})\leq
\pi r_0$, where $x,\tilde{x}\in \mathcal{M}$ and
$d_\mathcal{M}(x,\tilde{x})$ is the geodesic distance between $x$
and $\tilde{x}$~\citep[see][for both
definitions]{IsoMapConvergence}.

We define the radius $r(i)$ of neighborhood $i$ to be
\begin{equation*}
r(i)= \max_{j \in \{1,\ldots,k(i)\}} \norm{x_i-x_{i_j}}
\end{equation*}
where $x_{i_j}$ is the $j$-th out of the $k(i)$ neighbors of $x_i$.
Finally, we define $r_{\max}$ to be the maximum over $r(i)$ .

We say that the sample is \emph{dense} with respect to the chosen
neighborhoods if $r_{\max}<s_0$. Note that this condition depends on
the manifold structure, the given sample, and the choice of
neighborhoods. However, for a given compact manifold, if the
distribution that produces the sample is supported throughout the
entire manifold, this condition is valid with probability increasing
towards $1$ as the size of the sample is increased and the radius of
the neighborhoods is decreased.

We now state the manifold-learning problem more formally. Let
$D\subset \R^d$ be a compact set and let $\phi:D\rightarrow \R^q$
be a smooth and invertible isometric mapping. Let $\mathcal{M}$ be
the $d$-dimensional image of $D$ in $\R^q$. Let $x_1,\ldots,x_n$
be a sample taken from $\mathcal{M}$. Define neighborhoods $X_i$
for each of the points $x_i$. Assume that the sample
$x_1,\ldots,x_n$ is \emph{dense} with respect to the choice of
$X_i$. Find $y_1,\ldots,y_n \in \R^d$ that approximate
$\phi^{-1}(x_1),\ldots,\phi^{-1}(x_n)$ up to rotation and
translation.

\section{Faithful embedding}\label{sec:Faithful}
As discussed in Section~\ref{sec:problem}, a \emph{faithful}
embedding should preserve the structure of local neighborhoods on
the manifold, while finding a global embedding mapping.

In this section, we will attempt to answer the following two questions:
\begin{enumerate}
    \item How do we define ``preservation of the local structure of a neighborhood"?
    \item How do we find a global embedding that preserves the local structure?
\end{enumerate}

We now address the first question. Under the assumption of isometry,
it seems reasonable to demand that neighborhoods on the manifold and
their corresponding embeddings be closely related. A neighborhood on
the manifold and its embedding can be compared using the
\emph{Procrustes statistic}. The Procrustes statistic measures the
distance between two configurations of points. The statistic
computes the sum of squares between pairs of corresponding points
after one of the configurations is rotated and translated to best
match the other.

In the remainder of this paper we will represent any set of $k$
points $x_1,\ldots,x_k\in\R^q$ as a matrix $X_{k\times
q}=[x_1',\ldots,x_k']$; i.e., the $j$-th row of the matrix $X$
corresponds to $x_j$.

 Let $X$ be a
$k$-point set in $\R^q$ and let $Y$ be a $k$-point set in $\R^d$,
where $d \leq q$. We define the \emph{Procrustes statistic} $G(X,Y)$
as
\begin{eqnarray}\label{eq:G}
    G(X,Y)&=&\inf_{\{A,b: \,A'A=I,\,b\in \R^q\}} \sum_{i=1}^k \norm{x_i-Ay_i-b}^2 \\
    &=&\inf_{\{A,b:\,A'A=I,\,b\in \R^q\}}
    \tr{(X-YA'-1b')'(X-YA'-1b')}\nonumber
\end{eqnarray}
where the rotation matrix $A$ is a columns-orthogonal $q\times d$
matrix, $A'$ is the adjoint of $A$, and $1$ is a $k$-dimensional
vector of ones.

The \emph{Procrustes rotation matrix} $A$ and the \emph{Procrustes
translation vector} $b$ that minimize $G(X,Y)$ can be computed
explicitly, as follows. Let $Z=X'HY$ where $H \equiv
I-\frac{1}{k}11'$ is the centering matrix. Let $ULV'$ be the
singular-value decomposition (svd) of $Z$, where $U$ is an
orthogonal $q\times d$ matrix, $L$ is a non-negative $d\times d$
diagonal matrix, and $V$ is a $d\times d$ orthogonal
matrix~\citep{Mardia}. Then, the \emph{Procrustes rotation matrix}
$A$ is given by $UV'$~\citep{sibson}. The \emph{Procrustes
translation vector} $b$ is given by $\bar{x}-A\bar{y}$, where
$\bar{x}$ and $\bar{y}$ are the sample means of $X$ and $Y$,
respectively. Due to the last fact, we may write $G(X,Y)$ without
the translation vector $b$ as

\begin{equation*}
    G(X,Y)=\inf_{\{A:\,A'A=I\}}
    \tr{(X-YA')'H(X-YA')}=\inf_{\{A:\,A'A=I\}}\norm{H(X-YA')}_F^2\,,
\end{equation*}
where $\norm{\,}_F$ is the Frobenius norm.

Given $X$, the minimum of $G(X,Y)$ can be computed explicitly and
is achieved by the first $d$ principal components of $X$. This
result is a consequence of the following lemma.
\begin{lemma}\label{lem:PCAminimzeR}
Let $X=X_{k\times q}$ be a centered matrix of rank $q$ and let
$d\leq q$. Then
\begin{equation}
\inf_{\{\tilde{X}:\,\mathrm{rank}(\tilde{X})=d\}}\|X-\tilde{X}\|^2_F
\end{equation}
is obtained when $\tilde{X}$ equals the projection of $X$ on the
subspace spanned by the first $d$ principal components of $X$.
\end{lemma}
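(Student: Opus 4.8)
The plan is to reduce the statement to the classical Eckart--Young--Mirsky theorem by exploiting the singular-value decomposition of $X$. First I would write $X=U\Sigma V'$, the full SVD of $X$, where $\Sigma$ is the $k\times q$ matrix of singular values $\sigma_1\geq\cdots\geq\sigma_q>0$ (strict positivity since $\mathrm{rank}(X)=q$), and $U,V$ are orthogonal. Since the Frobenius norm is invariant under left- and right-multiplication by orthogonal matrices, for any candidate $\tilde X$ we have $\|X-\tilde X\|_F^2=\|\Sigma - U'\tilde X V\|_F^2$, and $\mathrm{rank}(U'\tilde X V)=\mathrm{rank}(\tilde X)=d$. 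Hence it suffices to show that the best rank-$d$ approximation to the (rectangular diagonal) matrix $\Sigma$ in Frobenius norm is the matrix $\Sigma_d$ obtained by keeping the $d$ largest $\sigma_i$ and zeroing the rest; translating $\Sigma_d$ back through $U(\cdot)V'$ yields exactly the projection of $X$ onto the span of its first $d$ principal components (i.e.\ the first $d$ right singular vectors), which is the claimed minimizer.

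The core step is therefore the diagonal case. Here I would argue as follows: given any matrix $B$ with $\mathrm{rank}(B)=d$, let $S$ be the $d$-dimensional column space of $B$, and let $P$ be the orthogonal projection onto $S$. Then $\|\Sigma-B\|_F^2=\sum_j \|(\Sigma-B)e_j\|^2\geq \sum_j \|(I-P)\Sigma e_j\|^2 = \|(I-P)\Sigma\|_F^2$, because $B e_j \in S$ and the closest point in $S$ to $\Sigma e_j$ is its projection $P\Sigma e_j$. So it remains to minimize $\|(I-P)\Sigma\|_F^2 = \mathrm{tr}(\Sigma'(I-P)\Sigma)=\sum_{i=1}^q \sigma_i^2 (1 - \langle Pe_i,e_i\rangle)$ over all rank-$d$ projections $P$. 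Writing $w_i=\langle Pe_i,e_i\rangle\in[0,1]$ with $\sum_i w_i=\mathrm{tr}(P)=d$, we are minimizing $\sum_i\sigma_i^2(1-w_i)$ subject to those constraints; since the $\sigma_i^2$ are nonincreasing, this linear program is minimized by putting $w_i=1$ for $i\leq d$ and $w_i=0$ otherwise, giving value $\sum_{i>d}\sigma_i^2$. This lower bound is attained by $P=$ projection onto $\mathrm{span}\{e_1,\ldots,e_d\}$, i.e.\ $B=\Sigma_d$, and equality in the first inequality forces the remaining freedom to be irrelevant for the value.

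The main obstacle — the only place that needs genuine care — is the constrained minimization of $\sum_i\sigma_i^2(1-w_i)$: one must verify that the achievable vectors $(w_1,\ldots,w_q)$ of diagonal entries of a rank-$d$ orthogonal projection are exactly (or at least contained in) $\{w\in[0,1]^q:\sum w_i=d\}$, so that the greedy solution is legitimate. Containment is all we need for the lower bound, and it is immediate: $0\preceq P\preceq I$ gives $w_i\in[0,1]$, and $\mathrm{tr}(P)=\mathrm{rank}(P)=d$ gives the sum constraint. Everything else is bookkeeping. A final remark: if $\sigma_d=\sigma_{d+1}$ the minimizer is not unique, but the lemma only asserts that the stated $\tilde X$ attains the infimum, so this causes no difficulty; I would note it in passing.
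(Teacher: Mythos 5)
Your proof is correct. The paper itself offers no argument for this lemma---it simply cites Mardia, Kent and Bibby (1979, p.~220)---so what you have written is a self-contained proof of the classical Eckart--Young/PCA optimality fact that the paper takes as known: reduce to the rectangular diagonal case via the SVD and Frobenius-norm orthogonal invariance, lower-bound $\|\Sigma-B\|_F^2$ by projecting columns onto the column space of $B$, and solve the resulting linear program in the diagonal entries $w_i=\langle Pe_i,e_i\rangle$ of a rank-$d$ projection. The only cosmetic point: since $P$ acts on $\R^k$ while the objective sums only over $i\le q$, the trace constraint gives $\sum_{i=1}^q w_i\le d$ rather than $=d$, but the greedy LP bound $\sum_{i>d}\sigma_i^2$ and its attainment by $\Sigma_d$ are unaffected, and your closing remark on non-uniqueness when $\sigma_d=\sigma_{d+1}$ is apt.
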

For proof, see~\citet[][page 220]{Mardia}.

Returning to the questions posed at the beginning of this section,
we will define how well an embedding preserves the local
neighborhoods using the Procrustes statistic $G(X_i,Y_i)$ of each
neighborhood-embedding pair $(X_i,Y_i)$. Therefore, a global
embedding that preserves the local structure can be found by
minimizing the sum of the Procrustes statistics of all
neighborhood-embedding pairs.

More formally, let $X$ be the $q$-dimensional sample from the
manifold and let $Y$ be a $d$-dimensional embedding of $X$. Let
$X_i$ be the neighborhood of $x_i$ ($i=1,\dots,n$) and $Y_i$ its
embedding. Define
\begin{equation}\label{eq:R}
R(X,Y)=\frac{1}{n}\sum_{i=1}^n G(X_i,Y_i)\;.
\end{equation}
The function $R$ measures the average quality of the neighborhood
embeddings. Embedding $Y$ is considered better than embedding
$\tilde{Y}$ in the local-neighborhood-preserving sense if
$R(X,Y)<R(X,\tilde{Y})$. This means that on the average, $Y$
preserves the structure of the local neighborhoods better than
$\tilde{Y}$.

The function $R(X,Y)$ is sensitive to scaling, therefore
normalization should be considered. A reasonable normalization is
\begin{equation}\label{eq:Rnormalized}
    R_{N}(X,Y)=\frac{1}{n}\sum_{i=1}^n
    G(X_i,Y_i)/ \norm{HX_i}_F^2 \,.
\end{equation}
The $i$-th summand of $R_{N}(X,Y)$ examines how well the rotated and
translated $Y_i$ ``explains" $X_i$, independent of the size of
$X_i$. This normalization solves the problem of increased weighting
for larger neighborhoods that exists in the unnormalized $R(X,Y)$.
It also allows comparison of embeddings for data sets of different
sizes. Hence, this normalized version is used to compare the results
of different outputs (see Section~\ref{sec:NumericalExamples}).


In the remainder of this section, we will justify our choice of the
Procrustes measure $R$ for a quantitative comparison of embeddings.
We will also present two additional, closely related measures. One
measure is $R_{PCA}$, which can ease computation when the input
space is of high dimension. The second measure is $R_{C}$, which is
a statistic designed for conformal mappings (see
Section~\ref{sec:problem}). Finally, we will discuss the relation
between the measures suggested in this work to the objective
functions of other algorithms, namely LTSA~\citep{LTSA} and
SDE~\citep{Weinberger}.

We now justify the use of the Procrustes statistic $G(X_i,Y_i)$ as a
measure of the quality of the local embedding. First, $G(X_i,Y_i)$
estimates the relation between the entire input neighborhood and its
embedding as one entity, instead of comparing angles and distances
within the neighborhood with those within its embedding. Second, the
Procrustes statistic is not highly sensitive to small perturbations
of the embedding. More formally, $G(X,Y)=\OO(\eps^2)$, where
$Y=X+\eps Z$ and $Z$ is a general
matrix~\citep[see][]{SibsonRobustness}. Finally, the function $G$ is
$l_2$-norm-based and therefore prefers small differences at many
points to big differences at fewer points. This is preferable in our
context, as the local embedding of the neighborhood should be
compatible with the embeddings of nearby neighborhoods.

The usage of $R$ as a measure of the quality of the global embedding
of the manifold is justified by Theorem~\ref{thm:R convergence}.
Theorem~\ref{thm:R convergence} claims that when the number of input
points $X$ increases, the low-dimensional points $Z=\phi^{-1}(X)$ of
the input data tend to zero $R$. This implies that the minimizer $Y$
of $R$ should be close to the original data set $Z$ (up to rotation
and translation).

\begin{theorem}\label{thm:R convergence}
Let $\mathcal{D}$ be a compact connected set. Let
$\phi:\mathcal{D}\rightarrow\R^q$ be an isometry. Let $X^{(n)}$ be
an $n$-point sample taken from $\phi(\mathcal{D})$, and let
$Z^{(n)}=\phi^{-1}(X^{(n)})$. Assume that the sample $X^{(n)}$ is
dense with respect to the choice of neighborhoods for all $n \ge
N_0$. Then for all $n \ge N_0$
\begin{equation}\label{eq:thRfirst}
R\left(X^{(n)},Z^{(n)}\right)=\OO(r_{max}^4)\,.
\end{equation}
\end{theorem}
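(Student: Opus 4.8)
The plan is to bound $G(X_i, Z_i)$ for each neighborhood $i$ separately and then average. Fix a point $x_i = \phi(z_i)$ with neighborhood $Z_i = \{z_{i_1},\dots,z_{i_{k(i)}}\}$ in $\mathcal{D}\subset\R^d$ and corresponding $X_i = \{\phi(z_{i_1}),\dots,\phi(z_{i_{k(i)}})\}$ on $\mathcal{M}$. Since $G(X_i,Z_i)$ is an infimum over rotations and translations, I do not need the optimal alignment; it suffices to exhibit one good candidate affine map from $Z_i$ into $\R^q$ and estimate the residual. The natural candidate is the differential of $\phi$ at $z_i$: write $A_i = d\phi_{z_i}$, which because $\phi$ is an isometry is a $q\times d$ matrix with $A_i'A_i = I$, exactly of the form admitted in the definition of $G$. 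Taylor's theorem for the smooth map $\phi$ gives, for each neighbor,
\begin{equation*}
\phi(z_{i_j}) = \phi(z_i) + A_i(z_{i_j}-z_i) + E_{ij},\qquad \norm{E_{ij}} \le \tfrac{1}{2} C \norm{z_{i_j}-z_i}^2,
\end{equation*}
where $C$ bounds the second derivatives of $\phi$ on the compact set $\mathcal{D}$. Plugging $A=A_i$ and $b = \phi(z_i) - A_i z_i$ into the defining sum yields $G(X_i,Z_i) \le \sum_j \norm{E_{ij}}^2 \le \tfrac{1}{4}C^2 \sum_j \norm{z_{i_j}-z_i}^4$.

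Next I would convert the bound from $\R^d$-radii to $\R^q$-radii. Because $\phi$ is a smooth isometry on the compact set $\mathcal{D}$, the inverse map $\phi^{-1}$ is Lipschitz on $\phi(\mathcal{D})$ with some constant $L$ (here one uses compactness and the density/branch-separation hypothesis to rule out the twisted-manifold pathology, so that Euclidean-close points in $\R^q$ are geodesically close and hence close in $\mathcal{D}$). Thus $\norm{z_{i_j}-z_i} \le L\norm{x_{i_j}-x_i} \le L\, r(i) \le L\, r_{max}$, giving
\begin{equation*}
G(X_i,Z_i) \le \tfrac{1}{4}C^2 L^4 \, k(i)\, r_{max}^4.
\end{equation*}
Averaging over $i=1,\dots,n$ and absorbing $k(i)$ (bounded, or at worst polynomially controlled, under the standing sampling assumptions) into the constant gives $R(X^{(n)},Z^{(n)}) = \frac{1}{n}\sum_i G(X_i,Z_i) = \OO(r_{max}^4)$, which is the claim.

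The routine parts are the Taylor expansion and the averaging; the genuinely delicate step is justifying the passage from intrinsic to extrinsic distances, i.e.\ that $\phi^{-1}$ is globally Lipschitz with a constant that does not degrade as $n\to\infty$. This is exactly where the \emph{isometry} and \emph{density} ($r_{max} < s_0$) assumptions enter: density guarantees that each neighborhood $X_i$, though defined by the ambient metric in $\R^q$, does not pick up geodesically distant branches, so its $\phi^{-1}$-preimage really is a small coordinate patch on which the local bi-Lipschitz constant of $\phi$ (controlled by $r_0$ and $C$ on the compact $\mathcal{D}$) applies uniformly. A secondary point to handle carefully is that the constant hidden in $\OO(\cdot)$ should depend only on $\mathcal{D}$, $\phi$, and the sampling regime, not on $n$; this follows since $C$, $L$, and the neighborhood sizes are all controlled uniformly for $n\ge N_0$. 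I would state these dependencies explicitly so that the $\OO(r_{max}^4)$ bound is meaningful as $r_{max}\to 0$.
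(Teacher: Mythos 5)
Your proposal is correct and follows essentially the same route as the paper's proof: you take the candidate rotation $A_i = d\phi_{z_i}$ (the Jacobian, column-orthonormal by isometry) with translation $b = \phi(z_i) - A_i z_i$, Taylor-expand to get an $\OO(\norm{z_{i_j}-z_i}^4)$ residual per neighbor, and then use density plus the branch-separation condition to bound $\norm{z_{i_j}-z_i}$ by a constant times $\norm{x_{i_j}-x_i} \le r_{\max}$ (the paper makes your Lipschitz constant $L$ explicit as $\pi/2$ via Lemma~3 of \citet{IsoMapConvergence}). Your explicit remark about absorbing $k(i)$ and keeping the constants uniform in $n$ is a small point of added care over the paper's presentation, not a different argument.
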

See Appendix~\ref{sec:proof1} for proof.

Replacing $R(X,Y)$ with the normalized version $R_{N}(X,Y)$ (see
Eq.~\ref{eq:Rnormalized}) and noting that
$\norm{HX_i}_F^2=\OO(r_i^2)$, we obtain
\begin{corollary}\label{cor:Rnormalized}
\begin{equation*}
R_{N}(X^{(n)},Z^{(n)})= \OO(r_{max}^2)\,.
\end{equation*}

\end{corollary}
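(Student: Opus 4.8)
The plan is to deduce the corollary not from the averaged conclusion of Theorem~\ref{thm:R convergence} as a black box, but from the per-neighborhood estimate that its proof supplies, divided by a matching two-sided estimate of the normalizing factor $\norm{HX_i}_F^2$. Concretely, I would establish that for every index $G(X_i,Z_i)=\OO(r(i)^4)$ while $\norm{HX_i}_F^2$ is of exact order $r(i)^2$, so that each summand of $R_N$ is $\OO(r(i)^2)\le\OO(r_{\max}^2)$ and the average inherits the bound.

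First I would extract the local estimate from the proof of Theorem~\ref{thm:R convergence}. The averaged statement $\frac1n\sum_i G(X_i,Z_i)=\OO(r_{\max}^4)$ is obtained there by bounding each Procrustes statistic separately: since $\phi$ is a smooth isometry, over a neighborhood of radius $r(i)$ the map $\phi^{-1}$ agrees with a rigid motion up to a second-order (curvature) term of size $\OO(r(i)^2)$, so the residual entering $G(X_i,Z_i)$ is $\OO(r(i)^2)$ per point and, after squaring and summing over the bounded number $k(i)$ of neighbors, $G(X_i,Z_i)=\OO(r(i)^4)$. I would simply record this termwise bound rather than its average.

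Next I would pin down the denominator, writing $\norm{HX_i}_F^2=\sum_{j=1}^{k(i)}\norm{x_{i_j}-\bar{x}_i}^2$. The upper estimate $\norm{HX_i}_F^2=\OO(r(i)^2)$ is immediate, since $\norm{x_{i_j}-\bar{x}_i}\le\norm{x_{i_j}-x_i}+\norm{x_i-\bar{x}_i}\le 2r(i)$ and $k(i)$ is bounded. For the corollary I also need the matching lower bound $\norm{HX_i}_F^2\ge c\,r(i)^2$ with $c>0$: by definition of $r(i)$ there is a neighbor $x_{i_{j^*}}$ with $\norm{x_{i_{j^*}}-x_i}=r(i)$, and (taking $x_i\in X_i$, as is standard) the sum contains both $\norm{x_{i_{j^*}}-\bar{x}_i}^2$ and $\norm{x_i-\bar{x}_i}^2$, whence $\norm{x_{i_{j^*}}-\bar{x}_i}^2+\norm{x_i-\bar{x}_i}^2\ge\tfrac12\norm{x_{i_{j^*}}-x_i}^2=\tfrac12 r(i)^2$. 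Thus $\norm{HX_i}_F^2\asymp r(i)^2$.

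Finally I would combine the two estimates: each normalized summand satisfies
\begin{equation*}
\frac{G(X_i,Z_i)}{\norm{HX_i}_F^2}=\frac{\OO(r(i)^4)}{\Theta(r(i)^2)}=\OO(r(i)^2)=\OO(r_{\max}^2),
\end{equation*}
and averaging over $i=1,\dots,n$ preserves the bound, giving $R_N(X^{(n)},Z^{(n)})=\OO(r_{\max}^2)$. The main obstacle is precisely the \emph{lower} bound on $\norm{HX_i}_F^2$: the excerpt advertises only the upper estimate $\norm{HX_i}_F^2=\OO(r(i)^2)$, whereas dividing by the denominator requires it not be much smaller than $r(i)^2$, i.e.\ that each neighborhood genuinely span a region of radius comparable to $r(i)$. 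If one insists on using only the averaged conclusion of Theorem~\ref{thm:R convergence}, one can instead factor out $\min_i\norm{HX_i}_F^2\ge c\,(\min_i r(i))^2$ from the sum, but this yields a bound of order $r_{\max}^4/(\min_i r(i))^2$ and reproduces the claim only when $\min_i r(i)$ is comparable to $r_{\max}$; the per-neighborhood route above avoids that extra uniformity assumption.
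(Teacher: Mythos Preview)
Your approach is essentially the paper's: the corollary is presented there as an immediate consequence of the per-neighborhood bound $G(X_i,Z_i)=\OO(r_i^4)$ established in the proof of Theorem~\ref{thm:R convergence}, together with the one-line remark that $\norm{HX_i}_F^2=\OO(r_i^2)$. You are in fact more careful than the paper, which states only the upper estimate on the normalizer; you correctly observe that the division requires a matching \emph{lower} bound $\norm{HX_i}_F^2\gtrsim r(i)^2$ and supply one via the parallelogram inequality applied to $x_i$ and the farthest neighbor. The paper's $\OO(r_i^2)$ should really be read as a two-sided estimate, and your argument makes that explicit.
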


To avoid heavy computations, a slightly different version of
$R(X,Y)$ can be considered. Instead of measuring the difference
between the original neighborhoods on the manifold and their
embeddings, one can compare the local PCA projections~\citep{Mardia}
of the original neighborhoods with their embeddings. We therefore
define
\begin{equation}\label{eq:RPCA}
    R_{PCA}(X,Y)=\frac{1}{n}\sum_{i=1}^n G(X_iP_i,Y_i)\,,
\end{equation}
where $P_i$ is the $d$-dimensional PCA projection matrix of $X_i$.

The convergence theorem for $R_{PCA}$ is similar to
Theorem~\ref{thm:R convergence}, but the convergence is slower.

\begin{theorem}\label{thm:RPCA convergence}
Let $\mathcal{D}$ be a compact connected set. Let
$\phi:\mathcal{D}\rightarrow\R^q$ be an isometry. Let $X^{(n)}$ be
an $n$-point sample taken from $\phi(\mathcal{D})$, and let
$Z^{(n)}=\phi^{-1}(X^{(n)})$. Assume that the sample $X^{(n)}$ is
dense with respect to the choice of neighborhoods for all $n \ge
N_0$. Then for all $n \ge N_0$
\begin{equation*}
R_{PCA}\left(X^{(n)},Z^{(n)}\right)=\OO(r_{max}^3)\,,
\end{equation*}
\end{theorem}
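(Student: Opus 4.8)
The plan is to reduce the statement to a single‑neighbourhood estimate and then compare the Procrustes statistic of the PCA‑projected neighbourhood with the un‑projected one, which Theorem~\ref{thm:R convergence} already controls. Since $R_{PCA}(X^{(n)},Z^{(n)})=\frac1n\sum_{i=1}^n G(X_iP_i,Z_i)$ and each $r(i)\le r_{max}$, it suffices to prove $G(X_iP_i,Z_i)=\OO(r(i)^3)$ uniformly in $i$. Fix $i$ and abbreviate $X=X_i$, $Z=Z_i$, $P=P_i$ (the $q\times d$ matrix whose columns are the top $d$ principal directions of $X$), $\Pi=PP'$, $r=r(i)$; let $A$ be a Procrustes‑optimal rotation for $G(X,Z)$. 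Applying the per‑neighbourhood estimate underlying Theorem~\ref{thm:R convergence} gives $\norm{H(X-ZA')}_F^2=G(X,Z)=\OO(r^4)$, hence $\norm{H(X-ZA')}_F=\OO(r^2)$.

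Next I would isolate two error sources. First, by Lemma~\ref{lem:PCAminimzeR}, $HX\Pi$ is the best rank‑$d$ approximation of $HX$ in Frobenius norm, and since $HZA'$ has rank at most $d$ we get $\norm{HX-HX\Pi}_F^2\le\norm{H(X-ZA')}_F^2=\OO(r^4)$. Second, writing $HXP=(HX\Pi)P$ and splitting $HX\Pi=HZA'\Pi+(HX-HZA')\Pi$ yields $HXP=HZM+EP$ with $M=A'P$ (a $d\times d$ matrix, $\norm{M}\le1$) and $E=HX-HZA'$, so $\norm{HXP-HZM}_F\le\norm{E}_F=\OO(r^2)$. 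Let $B\in O(d)$ be the orthogonal factor in the polar decomposition of $M$; then
\[
G(X_iP_i,Z_i)^{1/2}\le\norm{HXP-HZB'}_F\le\norm{HXP-HZM}_F+\norm{HZ(M-B')}_F .
\]
The first term is $\OO(r^2)$. For the second, $\norm{M-B'}=1-\cos\theta_{\max}$, where $\theta_{\max}$ is the largest principal angle between $\mathrm{col}(A)$ and $\mathrm{col}(P)$. Because $HZA'$ has all its rows in $\mathrm{col}(A)$ while $HX\Pi$ has all its rows in $\mathrm{col}(P)$, the triangle inequality gives $\norm{HZA'-HX\Pi}_F\le\norm{HX-HX\Pi}_F+\norm{H(X-ZA')}_F=\OO(r^2)$, which forces the transverse component $N=(I-\Pi)A$ of $A$ to satisfy $\norm{HZN'}_F=\OO(r^2)$; expanding $\norm{HZN'}_F^2=\tr{N'N\,Z'HZ}$ and using that the density hypothesis makes $Z'HZ$ non‑degenerate on the neighbourhood scale (its $d$‑th eigenvalue is of order $r^2$) controls $\theta_{\max}$, hence $\norm{HZ(M-B')}_F=\OO(r^3)$. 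Combining the two contributions gives $G(X_iP_i,Z_i)=\OO(r(i)^3)$, and averaging over $i$ yields the theorem. One also has to track the dependence of the constants on $k(i)$, which I would treat as routine.

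The main obstacle — and the reason the argument is more delicate than for $R$ — is this last step. Unlike $G(\cdot,Z)$ with a free rotation matrix of size $q\times d$, the statistic $G(X_iP_i,Z_i)$ involves a genuinely constrained minimisation over the \emph{orthogonal} group $O(d)$ in the same ambient dimension, so one cannot simply invoke the $1$‑Lipschitz dependence of ``distance to an affine image of $Z_i$'' on the first argument; the polar‑factor choice of $B$ is precisely what converts the problem into one of controlling how far the empirical PCA subspace $\mathrm{col}(P_i)$ has been tilted away from the column space of the optimal rotation. That tilt is caused by the second fundamental form of the isometric embedding coupling tangent and normal directions, and it is governed by the ratio of this coupling to the least eigenvalue of $Z_i'HZ_i$; establishing the required non‑degeneracy quantitatively from the density assumption, and plugging the resulting Davis–Kahan‑type bound through the estimate above, is the heart of the proof and is what produces the slower $\OO(r_{max}^3)$ rate.
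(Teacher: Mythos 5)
Your reduction to a per-neighbourhood bound and the first half of the argument (Eckart--Young giving $\norm{HX-HX\Pi}_F=\OO(r^2)$, and $\norm{HXP-HZM}_F\le\norm{E}_F=\OO(r^2)$ with $M=A'P$) are sound and parallel facts the paper also uses. The gap is exactly where you yourself locate ``the heart of the proof'': the claim that the density hypothesis forces $\lambda_d(Z_i'HZ_i)\gtrsim r^2$. In this paper ``dense'' only means $r_{\max}<s_0$, i.e.\ Euclidean neighbours are geodesically close; it says nothing about how well the $k(i)$ points of a given neighbourhood spread out inside the tangent plane. A neighbourhood whose points lie (nearly) on a line in parameter space has $\lambda_d(Z_i'HZ_i)$ arbitrarily small, PCA then picks up a curvature (normal) direction as a principal axis, and the largest principal angle between $\mathrm{col}(P_i)$ and the tangent space is essentially $\pi/2$ --- the paper explicitly warns that this angle ``can be arbitrarily large.'' So your Davis--Kahan step, and hence $\norm{HZ(M-B')}_F=\OO(r^3)$, cannot be derived from the stated assumptions. (There is also a bookkeeping slip: your two bounds would give $G^{1/2}=\OO(r^2)$, i.e.\ $G=\OO(r^4)$, which should itself have raised suspicion, since the paper stresses that $R_{PCA}$ converges strictly more slowly than $R$.)

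The paper's proof avoids ever bounding the angle. Writing $w_p=\ap v_p+\bp\vp$ for the principal vector pairs obtained from the svd of $J_i'P_i$, it expands $\sum_{j,p}\big((v_p-w_p)'\xj\big)^2$ and controls not $\bp$ (the sine of the angle, possibly equal to $1$) but the products $\bp\big(\vp{}'\xj\big)$: because the PCA plane is the best rank-$d$ fit, $\sum_{j,p}\big(\vp{}'\xj\big)^2=\OO(r^4)$ no matter how the plane is tilted, and the surviving cross term $\sum 2\bp\big(v_p'\xj\big)\big(\vp{}'\xj\big)=\OO(r)\cdot\OO(r^2)$ is precisely what produces the $\OO(r^3)$ rate. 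To rescue your route you would have to either add a quantitative non-degeneracy assumption on each neighbourhood (in which case your argument would in fact yield $\OO(r^4)$, a different and stronger statement), or replace the uniform operator-norm bound on $M-B'$ by a bound on $HZ(M-B')$ that, like the paper's, weights each principal angle by the corresponding spread of the data in that direction.
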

See Appendix~\ref{sec:proof2} for proof.

We now present another version of the Procrustes measure
$R_{C}(X,Y)$, suitable for conformal mappings. Here we want to
compare between each original neighborhood $X_i$ and its
corresponding embedding $Y_i$, where we allow $Y_i$ not only to be
rotated and translated but also to be rescaled. Define
\begin{equation*}
    G_{C}(X,Y)=\inf_{\{A:\,A'A=I,0<c\in \Real\}}\tr{\left(X-Y(cA')\right)'H\left(X-Y(cA')\right)}\,.
\end{equation*}
Note that the scalar $c$ was introduced here to allow scaling of
$Y$.  Let $Z=X'HY$ and let $ULV'$ be the svd of $Z$. The minimizer
rotation matrix $A$ is given by $UV'$. The minimizer constant $c$
is given by $\tr{L}/\tr{Y'Y}$\citep[see][]{sibson}. The
(normalized) conformal Procrustes measure is given by
\begin{equation}\label{eq:RC}
    R_{C}(X,Y)=\frac{1}{n}\sum_{i=1}^n G_{C}(X_i,Y_i)/ \norm{HX_i}_F^2 \,.
\end{equation}
Note that $R_{C}(X,Y)\leq R_{N}(X,Y)$ since the constraints are
relaxed in the definition of $R_{C}(X,Y)$. However, the lower
bound in both cases is equal (see Lemma~\ref{lem:PCAminimzeR}).

We present a convergence theorem, similar to that of $R$ and
$R_{PCA}$.
\begin{theorem}\label{thm:RC convergence}
Let $\mathcal{D}$ be a compact connected set. Let
$\tilde{\phi}:\mathcal{D}\rightarrow\R^q$ be a conformal mapping.
Let $X^{(n)}$ be an $n$-point sample taken from
$\tilde{\phi}(\mathcal{D})$, and let
$Z^{(n)}=\tilde{\phi}^{-1}(X^{(n)})$. Assume that the sample
$X^{(n)}$ is dense with respect to the choice of neighborhoods for
all $n \ge N_0$. Then for all $n \ge N_0$ we have\begin{equation*}
R_{C}\left(X^{(n)},Z^{(n)}\right)=\OO(r_{max}^2)\,.
\end{equation*}
\end{theorem}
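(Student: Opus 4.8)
The plan is to reduce the conformal case to the isometric case already treated in Theorem~\ref{thm:R convergence}, by "undoing" the local scaling factor. Since $\tilde\phi:\mathcal D\to\R^q$ is conformal, at each point $z\in\mathcal D$ the differential $d\tilde\phi_z$ equals $c(z)$ times an isometry, where $c:\mathcal D\to(0,\infty)$ is continuous (and, on the compact set $\mathcal D$, bounded above and below away from $0$, and uniformly continuous). First I would fix a neighborhood index $i$ and let $z_i$ be the center point; set $c_i=c(z_i)$. On $Z_i$, the map $\tilde\phi$ is, to first order, the composition of scaling by $c_i$ with an isometry plus a curvature remainder. Concretely, I would write a Taylor expansion of $\tilde\phi$ around $z_i$ exactly as in the proof of Theorem~\ref{thm:R convergence}, but now the linear term carries the factor $c_i$; the bound on the second-order (curvature) term is $\OO(r_{\max}^2)$ in the same way, using compactness of $\mathcal D$ to get uniform constants.

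The key step is then the following comparison. Because $G_C$ allows an arbitrary positive scalar $c$, we have
\begin{equation*}
G_C(X_i,Z_i)\le \inf_{\{A:\,A'A=I\}}\tr{\bigl(X_i-Z_i(c_i A')\bigr)'H\bigl(X_i-Z_i(c_i A')\bigr)}
= G(X_i,\,c_i Z_i)\,,
\end{equation*}
where $c_iZ_i$ denotes the neighborhood $Z_i$ with all coordinates multiplied by $c_i$. Now $c_iZ_i$ is, up to the curvature remainder, exactly the linear image of $Z_i$ that best matches $X_i$: since $d\tilde\phi_{z_i}=c_i\cdot(\text{isometry})$, the points $X_i=\tilde\phi(Z_i)$ agree with a rigid motion applied to $c_iZ_i$ up to an error of order $r_{\max}^2$ per point. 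Hence $G(X_i,c_iZ_i)=\OO(r_{\max}^4)$ by precisely the argument of Theorem~\ref{thm:R convergence} applied to the isometry $z\mapsto \tilde\phi(z_i)+d\tilde\phi_{z_i}(z-z_i)$ rescaled, plus the remainder estimate. Dividing by $\norm{HX_i}_F^2=\OO(r_i^2)$ (note $\norm{HX_i}_F^2\asymp c_i^2\norm{HZ_i}_F^2$, and $c_i$ is bounded below), the $i$-th summand of $R_C$ is $\OO(r_{\max}^2)$, and averaging over $i$ gives the claim.

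The main obstacle I expect is bookkeeping the interaction between the point-dependent scalar $c(\cdot)$ and the curvature remainder: one must check that replacing the true conformal factor $c(z)$ (which varies across the neighborhood) by the constant $c_i$ contributes only an $\OO(r_{\max})$ relative error, so that it is absorbed into the remainder rather than dominating it. This uses uniform continuity of $c$ on the compact set $\mathcal D$ together with the density assumption $r_{\max}<s_0$, exactly as uniform bounds on curvature are used in the isometric proof. Once this uniformity is in hand, the rest is a direct transcription of the Theorem~\ref{thm:R convergence} argument with the extra scalar degree of freedom in $G_C$ only making the infimum smaller, hence the bound easier.
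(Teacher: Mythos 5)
Your proposal is correct and follows essentially the same route as the paper: the paper also bounds $G_C(X_i,Z_i)$ by substituting for $c_iA_i$ the Jacobian $J_i=J_{\tilde\phi}(z_i)$ (which by conformality is exactly a positive scalar times an orthogonal matrix), Taylor-expands $\tilde\phi$ about $z_i$ to get an $\OO(\norm{z_{i_j}-z_i}^4)$ bound per summand, uses compactness of $\mathcal{D}$ to bound the conformal factor below by $c_{\min}>0$ and the density assumption (via the geodesic-distance lemma of Bernstein et al.) to convert $\norm{z_{i_j}-z_i}$ into $\OO(r_i)$, and finally divides by $\norm{HX_i}_F^2$. Your "reduce to the isometric case by rescaling $Z_i$ by $c_i$" is just a repackaging of that same substitution, and your worry about the spatial variation of $c(\cdot)$ is already absorbed into the second-order Taylor remainder, as you suspected.
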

See Appendix~\ref{sec:proof3} for proof. Note that this result is of
the same convergence rate as of $R_{N}$ (see
Corollary~\ref{cor:Rnormalized}).

A cost function somewhat similar to the measure $R_{PCA}$ was
presented by~\citet{LTSA} in a slightly different context. The local
PCA projection of neighborhoods was used as an approximation of the
tangent plane at each point. The resulting algorithm, local tangent
subspaces alignment (LTSA), is based on minimizing the function
\begin{equation*}
    \sum_{i=1}^n \norm{(I-P_iP_i')HY_i}_F^2\,,
\end{equation*}
where the $k(i) \times d$ matrices $P_i$ are as in $R_{PCA}$. The
minimization performed by LTSA is under a normalization
constraint. This means that as a measure, LTSA's objective
function is designed to compare normalized outputs $Y$ (otherwise
$Y=0$ would be the trivial minimizer) and is therefore unsuitable
as a measure.

Another algorithm worth mentioning here is SDE~\citep{Weinberger}.
The constraints on the output required by this algorithm are that
all the distances and angles within each neighborhood be
preserved. Therefore the output of this algorithm should always be
close to the minimum of $R(X,Y)$. The maximization of the
objective function of this algorithm
\begin{equation*}
 \sum_{i,j} \norm{y_i-y_j}^2
\end{equation*}
is reasonable when the aforementioned constraints are enforced.
However, it is not relevant as a measure for comparison of general
outputs that do not fulfill these constraints.

In summary, we return to the questions posed at the beginning of
this section. We choose to define preservation of local
neighborhoods as minimization of the Procrustes measure $R$ (see
Eq.~\ref{eq:R}). We therefore find a global embedding that best
preserves local structure by minimizing $R$. For computational
reasons, minimization of $R_{PCA}$ (see Eq.~\ref{eq:RPCA}) may be
preferred. For conformal maps we suggest the measure $R_{C}$ (see
Eq.~\ref{eq:RC}), which allows separate scaling of each
neighborhood.

\section{Algorithms}\label{sec:algo}
In Section~\ref{sec:Faithful} we showed that a faithful embedding
should yield low $R(X,Y)$ and $R_{PCA} (X,Y)$ values. Therefore we
may attempt to find a faithful embedding by minimizing these
functions. However, $R(X,Y)$ and $R_{PCA} (X,Y)$ are not necessarily
convex functions and may have more than one local minimum. In this
section we present two algorithms for minimization of $R(X,Y)$ or
$R_{PCA} (X,Y)$. In addition, we present an iterative method that
can improve the output of the two algorithms, as well as other
existing algorithms.

The first algorithm, greedy Procrustes (GP), performs the embeddings
one neighborhood at a time. The first neighborhood is embedded using
the PCA projection (see Lemma~\ref{lem:PCAminimzeR}). At each stage,
the following neighborhood is embedded by finding the best embedding
with respect to the embeddings already found. This greedy algorithm
is presented in Section~\ref{sec:greedy}.

The second algorithm, Procrustes subspaces alignment (PSA), is based
on an alignment of the local PCA projection subspaces. The use of
local PCA produces a good low-dimensional description of the
neighborhoods, but the description of each of the neighborhoods is
in an arbitrary coordinate system. PSA performs the global embedding
by finding the local embeddings and then aligning them. PSA is
described in Section~\ref{sec:aligmentMinimization}. Simulated
annealing (SA) is used to find the alignment of the subspaces (see
Section~\ref{sec:SA}).

After an initial solution is found using either GP or PSA, the
solution can be improved using an iterative procedure until there is
no improvement (see Section~\ref{sec:iterative}).

\subsection{Greedy Procrustes (GP)}\label{sec:greedy}
GP finds the neighborhoods' embeddings one by one.
The flow of the algorithm is described below.
\begin{enumerate}
    \item \textbf{Initialization:}\begin{itemize}
        \item Find the neighbors $X_i$ of each point $x_i$ and let $\mathrm{Neighbors}(i)$ be the indices of the neighbors $X_i$.
        \item Initialize the list of all embedded points' indices to $N:=\emptyset$.
    \end{itemize}

    \item \textbf{Embedding of the first neighborhood:} \begin{itemize}
        \item Choose an index $i$ (randomly).
        \item Calculate the embedding $Y_i=X_i P_i$, where $P_i$ is the PCA projection matrix of $X_i$.
        \item Update the list of indices of embedded points   $N=\mathrm{Neighbors}(i)$.
    \end{itemize}
    \item \textbf{Find all other embeddings (iteratively):}
    \begin{itemize}
        \item  Find $j$, where $x_j$ is the unembedded point with the largest number of embedded neighbors,\\
        $ j=\argmax_{p\notin N}\left|\{\mathrm{Neighbors}(p)
        \cap N\}\right|$.
        \item Define $\overline{X}_j=\left\{x_p|p\in \mathrm{Neighbors}(j)\cap N\right\}$,
        the points in $X_j$ that are already embedded.
        \\Define $\overline{Y}_j$ as the (previously calculated) embedding of $\overline{X}_j$.\\
        Define $\widetilde{X}_j=\left\{x_p|p\in \mathrm{Neighbors}(j)\setminus N\right\}$, the points in $X_j$ that are
         not embedded yet.
        \item Compute the Procrustes rotation matrix $A_j$ and
         translation vector $b_j$ between $\overline{X}_j$ and
         $\overline{Y}_j$.
        \item Define the embedding of the points in $\widetilde{X}_j$ as
        $\widetilde{Y}_j=\widetilde{X}_jA_j+b_j$.
        \item Update the list of indices of embedded points  $N$\\ $N:=N\cup \mathrm{Neighbors}(j)$.
\end{itemize}
\item \textbf{Stopping condition:}
        \\Stop when $|N|=n$, i.e., when all points are embedded.
\end{enumerate}

The main advantage of GP is that it is fast. The embedding for $X_i$
is computed in $\OO(k(i)^3)$ where $k(i)$ is the size of $X_i$.
Therefore, the overall computation time is $\OO(nK^3)$, where
$K=\max_i k(i)$. While GP does not claim to find the global minimum,
it does find an embedding that preserves the local neighborhood's
structure. The main disadvantage of GP is that it has incremental
errors.

\subsection{Procrustes Subspaces Alignment (PSA)}\label{sec:aligmentMinimization}
$R(X,Y)$ can be written in terms of the Procrustes rotation matrices
$A_i$ as
\begin{eqnarray*}
    R(X,Y)&=& \frac{1}{n}\sum_{i=1}^n \inf_{\{A_i:\,A_i'A_i=I\}}
    \norm{H(X_i  -Y_i A_i')}_F^2\,,
\end{eqnarray*}
where $H$ is the centering matrix. $A_i$ can be calculated, given
$X$ and $Y$. However, as $Y$ is not given, one way to find $Y$ is
by first guessing the matrices $A_i$ and then finding the $Y$ that
minimizes
\begin{equation}\label{eq:RXYA1AN}
    R(X,Y|A_1,\ldots,A_n)=\frac{1}{n}\sum_{i=1}^n \norm{H(X_i  -Y_i A_i')}_F^2\,.
\end{equation}
$Y$ can be found by taking derivatives of $R(X,Y|A_1,\ldots,A_n)$.

We therefore need to choose $A_i$ wisely. The choice of the Jacobian
matrices $J_i\equiv J_\phi (z_i)$ as a guess for $A_i$ is justified
by the following, as shown in the proof of Theorem~\ref{thm:R
convergence} (see Eq.~\ref{eq:proofR}). As the size of the sample is
increased, $\frac{1}{n}\sum_{i=i}^{n} \norm{H(X_i -Z_i
J_i')}\rightarrow 0$. This means that choosing $J_i$ will lead to a
solution $Y$ that is close to the minimizer of $R(X,Y)$.

The PCA projection matrices $P_i$ approximate the unknown Jacobian
matrices $J_i$ up to a rotation. To use them in place of $J_i$, we
must first align the projections correctly. Therefore, our guess for
$A_i$ is of the form  $A_i=P_i O_i$, where $O_i$ are $d \times d$
rotation matrices that minimize
\begin{equation}\label{eq:f}
f(A_1,\ldots ,A_n)=\sum_{i=1}^n \sum_{j\in
\mathrm{Neighbors}(i)}\norm{A_i -A_j}_F^2 \,.
\end{equation}
The rotation matrices $O_i$ can be found using simulated annealing,
as described in Section~\ref{sec:SA}.

Once the matrices $A_i$ are found, we need to minimize
$R(X,Y|A_1,\ldots,A_n)$. We first write $R(X,Y|A_1,\ldots,A_n)$ in
matrix notation as
\begin{equation*}
    R(X,Y|A_1,\ldots,A_n)=\frac{1}{n}\sum_{i=1}^n \mathrm{tr}\left((X-YA_i')'H_i(X-Y A_i')\right)\,.
\end{equation*}
Here $H_i$ is the centering matrix of neighborhood $X_i$
\begin{equation*}
H_i(k,l) = \left\{
\begin{array}{cc}
          1-\frac{1}{k(i)} & k=l \mathrm{\,and\,} k \in \mathrm{Neighbors}(i)  \\
         - \frac{1}{k(i)} &  k \neq l \mathrm{\, and\,} k,l \in \mathrm{Neighbors}(i) \\
          0& \mathrm{elsewhere}\,.
        \end{array} \right.
\end{equation*}

The rows of the matrix $H_i X$ at $\mathrm{Neighbors}(i)$ indices
are $x_i$'s centered neighborhood, where all the other rows equal
zero, and similarly for $H_i Y$. .

Taking the derivative of $R(X,Y|A_1,\ldots,A_n)$
(Eq.~\ref{eq:RXYA1AN}) with respect to $Y$~\citep[see][]{Mardia} and
using the fact that $A_i'A_i=I$, we obtain
\begin{equation}\label{eq:df(Y)}
  \df{Y} R(X,Y|A_1,\ldots,A_n)= \frac{2}{n}  \sum_{i=1}^n H_iXA_i-\frac{2}{n} \sum_{i=1}^n H_i Y \,.
\end{equation}
Using the general inverse of $\sum_{i=1}^n H_i$ we can write $Y$
as
\begin{equation}\label{eq:Y}
Y=\Big(\sum_{i=1}^n H_i\Big)^{\bot}\sum_{i=1}^n H_i X A_i \,.
\end{equation}

Summarizing, we present the PSA algorithm.
\begin{enumerate}
     \item \textbf{Initialization:} \begin{itemize}
        \item Find the neighbors $X_i$ of each point $x_i$.
        \item Find the PCA projection matrices $P_i$ of the neighborhood $X_i$.
    \end{itemize}

    \item \textbf{Alignment of the projection matrices:}\\ Find $A_i$ that
    minimize Eq.~\ref{eq:f}
    using, for example, simulated annealing (see Section~\ref{sec:SA}).
    \item \textbf{Find the embedding:}\\
    Compute $Y$ according to Eq.~\ref{eq:Y}.
\end{enumerate}

The advantage of this algorithm is that it is global. The
computation time of this algorithm (assuming that the matrices $A_i$
are already known) depends on multiplying by the inverse of the
sparse symmetric semi-positive definite matrix $\sum_{i=1}^n H_i $,
which can be very costly. However, this matrix need not be computed
explicitly. Instead, one may solve $d$-linear-equation systems of
the form $(\sum_{i=1}^n H_i)x=b$, which can be computed much faster
\citep[see, for example,][]{SparseMatrices}.

\subsection{Simulated Annealing (SA) alignment procedure}\label{sec:SA}
In step 2.\ of PSA (see Section~\ref{sec:aligmentMinimization}), it
is necessary to align the PCA projection matrices $P_i$. In the
following we suggest an alignment method based on simulated
annealing (SA)~\citep{kirkpatrick}. The aim of the suggested
algorithm is to find a set of columns-orthonormal matrices
$A_1,\ldots,A_n$  that minimize Eq.~\ref{eq:f}. A number of
closely-related algorithms, designed to find embeddings using
alignment of some local dimensionally-reduced descriptions, were
previously suggested. \citet{ProbabilisticGlobalCoordinationRoweis}
and \citet{VerbeekCoordinatingPCA} introduced algorithms based on
probabilistic mixtures of local FA and PCA structures, respectively.
As Eq.~\ref{eq:f} and these two algorithms suffer from local minima,
the use of simulated annealing may be beneficial. Another algorithm,
suggested by~\citet{TehAutomaticAlignment}, uses a convex objective
function to find the alignment. The output matrices of this
algorithm are not necessarily columns-orthonormal, as is required in
our case .

Minimizing Eq.~\ref{eq:f} is similar to the Ising model
problem~\citep[see, for example,][]{IsingModel}. The Ising model
consists of a neighbor-graph and a configuration space that is the
set of all possible assignments of $+1$ or $-1$ to each vertex of
the graph. A low-energy state is one in which neighboring points
have the same sign. Our problem consists of a neighbor-graph with a
configuration space that includes all of the rotations of the
projection matrices $P_i$ at each point $x_i$. Minimizing the
function $f$ is similar to finding a low-energy state of the Ising
model. As solutions to the Ising model usually involve algorithms
such as simulated annealing , we take the same path here.
%
%

We present the SA algorithm, following the algorithm suggested
by~\citet{SA_continuous}, modified for our problem.

\begin{enumerate}

    \item \textbf{Initialization:}
    \begin{itemize}
        \item Choose an initial configuration (for example, using GP).
        \item Compute the initial temperature~\citep[see][]{SA_continuous}.
        \item Define the cooling scheme~\citep[see][]{SA_continuous}.
    \end{itemize}

    \item \textbf{Single SA step:}\begin{itemize}
        \item Choose $i$ randomly.\\
    Generate a random $d \times d$ rotation matrix $O_i$~\citep[see][]{generatingRandomMatrices}.\\
     Define $A_i^{\mathrm{New}} \equiv A_i O_i$.
        \item Compute $f(A_1,\ldots, A_i^{\mathrm{New}},\ldots,A_n)$. \\Note
    that it is enough to compute $\sum_{\mathrm{Neighbors}(i)}\norm{
    A_i^{\mathrm{New}}-A_j}_F^2$.
        \item Accept $A_i^{\mathrm{New}}$ if either
        \begin{equation*}
f(A_1,\ldots, A_i^{\mathrm{New}},\ldots,A_n)<f(A_1,\ldots,
        A_i,\ldots,A_n)
        \end{equation*}
         or with some probability depending on the current
        temperature.

        \item Decrease the temperature and stop if the lowest temperature is reached.
    \end{itemize}
   \item \textbf{Outer iterations:}
   \begin{itemize}
    \item First iteration: Perform a run of SA on all
   matrices $A_1,\ldots,A_n$.
    \\Find the largest cluster of aligned matrices (for example, use BFS~\citep{corman} and define an alignment criterion).
    \item Other iterations: Apply SA to the matrices that are not in the largest cluster. Update the largest cluster after each run.
    \item Repeat until the size of the largest cluster includes almost all of the matrices $A_i$.
   \end{itemize}
\end{enumerate}

Using SA is complicated. The cooling scheme requires the estimation
of many parameters, and the run-time depends heavily on the correct
choice of these parameters. For output of large dimension, the
alignment is difficult, and the output of SA can be poor. Although
SA is a time-consuming algorithm, each iteration is very simple,
involving only $O(Kqd^3)$ operations, where $K$ is the maximum
number of neighbors, and $q$ and $d$ are the input and output
dimensions, respectively. In addition, the memory requirements are
modest. Therefore, SA can run even when the number of points is
large.

\subsection{Iterative procedure}\label{sec:iterative}
Given a solution of GP, PSA, or any other technique, it is usually
possible to modify $Y$ so that $R(X,Y)$ is further decreased. The
idea of the iterative procedure we present here was suggested
independently by~\citet{LTSA}, but no details were supplied.

In Section~\ref{sec:aligmentMinimization}, we showed that given $Y$,
the improved matrices $A_i$ are obtained by finding the Procrustes
matrices between $Y_i$ and $X_i$. Given the matrices $A_i$, the
embedding $Y$ can be found using Eq.~\ref{eq:Y}. An iterative
procedure would require finding first the new matrices $A_i$ and
then a new embedding $Y$ at each stage. This would be repeated until
the change in the value of $R(X,Y)$ was small.

The problem with this scheme is that it involves the computation of
the inverse of the matrix $\sum_{i=1}^{n}H_i$ (see end of
Section~\ref{sec:aligmentMinimization}). We therefore suggest a
modified version of this iterative procedure, which is easier to
compute. Recall that
\begin{equation*}
    R(X,Y)=\sum_{i=1}^n \sum_{j \in \mathrm{Neighbors}(i) } \norm{x_j-A_i y_j-b_i }^2 \,.
\end{equation*}
The least-squares solution for $b_i$ is
\begin{equation}\label{eq:find_bi}
\frac{1}{|\{\mathrm{Neighbors}(i)\}|}\sum_{j \in
\mathrm{Neighbors}(i) } \left(x_j-A_i y_j\right)\,.
\end{equation}
The least-squares solution for $y_j$ is
\begin{equation}\label{eq:find_yj} \frac{1}{|\{i:j\in
\mathrm{Neighbors}(i)\}|}\sum_{\{i:j\in \mathrm{Neighbors}(i)\}}
A_i'(x_j-b_i)\,.
\end{equation}
Note that we get a different solution than that in
Eq.~\ref{eq:df(Y)}. The reason is that here we consider $b_i$ as
constants when we look for a solution for $y_j$. In fact, $y_j$
appear in the definition of the $b_i$. However, as $y_j$ appear
there multiplied by $1/k(i)$, these terms make only a small
contribution.

We suggest performing the iterations as follows. First, find the
Procrustes rotation matrices $A_i$ and the translation vectors $b_i$
using Eq.~\ref{eq:find_bi}. Then find $y_j$ using
Eq.~\ref{eq:find_yj}. Repeat until $R(X,Y)$ no longer decreases
significantly.

\section{Numerical Examples}\label{sec:NumericalExamples}
In this section we evaluate the algorithms GP and PSA on data sets
that we assumed to be sampled from underlying manifolds. We compare
the results to those obtained by LLE~\citep{LLE},
Isomap~\citep{ISOMAP}, and LTSA~\citep{LTSA}, both visually and
using the measures $R_N(X,Y)$ and $R_C(X,Y)$
(see~Table~\ref{tb:compare}).

The algorithms GP and PSA were implemented in the Matlab
environment, running on a Pentium 4 with a 3 Ghz CPU and 0.98 GB of
RAM. The alignment stage of PSA was implemented using SA (see
Section~\ref{sec:SA}). The runs of both GP and PSA were followed by
the iterative procedure described in Section~\ref{sec:iterative} to
improve the minimization of $R(X,Y)$. LLE, Isomap, and LTSA were
evaluated using the Matlab code taken from the sites of the
respective authors. The algorithm SDE~\citep{Weinberger}, whose
minimization is closest in spirit to ours, was also tested; however,
it suffers from heavy computational demands, and the results of this
algorithm could not be obtained using the code provided in the site.

The data sets are described in Table~\ref{tb:datasets}. We ran all
five algorithms using $k=6,9,12,15$ and $18$ nearest neighbors.
The minimum values for $R_N(X,Y)$ and $R_C(X,Y)$ are presented in
Table~\ref{tb:compare}. The results in all cases were
qualitatively the same, therefore in the images we show the
results for $k=12$ only.

\begin{table}[!bh]
  \centering

\begin{tabular}{|c|c|c|c|c|l|c|}
\hline
Name & n  & q & d & Description & Figure \\
\hline Swissroll & 1600  &3 & 2 &
isometrically embedded in $R^3$&Fig~\ref{fig:swissroll}\\
\hline Hemisphere & 2500 & 3 &2 & not isometrically embedded & Fig~\ref{fig:sphere}\\
&&&&in $R^3$&\\ \hline Cylinder & 800 & 3&2 & locally isometric to
$R^2$,& Fig~\ref{fig:cylinder}\\  & &&& has no
global embedding in $R^2$ & \\
\hline Faces &1965  &560 &3 & $20\times 28$ pixel grayscale face
images& Fig~\ref{fig:faces}\\ &&&& \citep[see][]{FreyFace}&\\
\hline Twos & 638  & 256 & 10 & images of handwritten ``2''s&
None,due
\\ &&&& from the USPS data set& to output \\
&&&& of handwritten digits~\citep{twoDatabase} & dimension \\
  \hline
\end{tabular}
\bigskip \caption{Description of five data sets used to compare the
different algorithms. $n$ is the sample size and $q$ and $d$ are the
input and output dimensions, respectively.}\label{tb:datasets}
\end{table}
Overall, GP and PSA perform satisfactorily as shown both in the
figures and in Table~\ref{tb:compare}. The fact that in most of the
examples GP and PSA get lower values than LLE, Isomap, and LTSA is
perhaps not surprising, as GP and PSA are designed to minimize
$R(X,Y)$. The run-times of the algorithms excluding PSA is on the
order of seconds while it takes PSA a few hours to run. Memory
requirements of GP and PSA are lower than those of the other
algorithms. As a consequence of the memory requirements, results
could not be obtained for LLE, Isomap and LTSA for $n>3000$.

Use of the measure $R(X,Y)$ allows a quantitative comparison of
visually similar outputs. Regarding the output of the cylinder (see
Fig.~\ref{fig:cylinder}), for example, PSA and Isomap both give
topologically sound results; however, $R(X,Y)$ shows that locally,
PSA does a better job. In addition, $R(X,Y)$ can be used to optimize
embedding parameters such as neighborhood size (see
Fig.\ref{fig:k_LLE}).

\begin{table}[!bh]
\centering
\begin{tabular}{|c|c|c|c|c|c|}
  \hline
& Swissroll  & Hemisphere &Cylinder &Faces&Twos   \\

\hline GP &0.00 [0.00]&0.02 [0.01]&0.13 [0.01]& 0.45 [0.36]&0.00 [0.00]\\
\hline PSA &0.00 [0.00]&0.03 [0.01]&0.02 [0.01]&0.35 [0.30&0.00 [0.00]\\
\hline LLE &0.81 [0.23]&0.60 [0.00]&0.73 [0.13]& 0.99 [0.79]&0.82 [0.23]\\
\hline Isomap &0.01 [0.01]&0.03 [0.02]&0.34 [0.25]& 0.5 [0.38]&0.02 [0.01]\\
\hline LTSA &0.99 [0.22]&0.93 [0.04]&0.59 [0.48]&0.99 [0.53]&0.98 [0.37]\\
\hline Lower Bound &0.00&0.00&0.00&0.11&0.00 \\ \hline
\end{tabular}

\bigskip
\caption{Comparison of the output of the different algorithms using
$R_{N}(X,Y)\;[R_C(X,Y)]$ as the measures of the quality of the
embeddings. These values are the minima of both measures as a
function of neighborhood size $k$, for $k=6,9,12,15,18$. The lower
bound was computed using local PCA at each neighborhood (see
Lemma~\ref{lem:PCAminimzeR}). }\label{tb:compare}
\end{table}
\begin{figure}[!t]
\begin{center}
\includegraphics{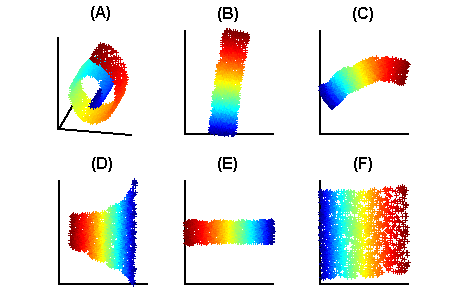}
\caption{A $1600$-point sample taken from the three-dimensional
Swissroll input is presented in~(A). (B)-(F) show the output of GP,
PSA, LLE, Isomap, and LTSA, respectively, for $k=12$. Both GP and
PSA, like Isomap, succeed in finding the proportions of the original
data.} \label{fig:swissroll}
\end{center}
\end{figure}

\begin{figure}[!t]
\begin{center}
\includegraphics{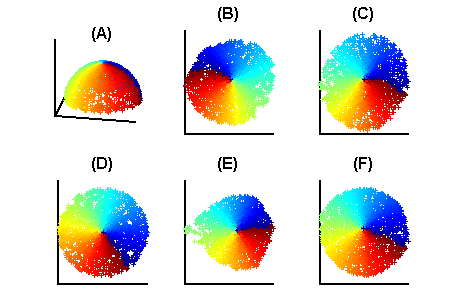}
\caption{\label{fig:sphere}The input of a $2500$-point sample taken
from a hemisphere is presented in~(A). (B)-(F) show the output of
GP, PSA, LLE, Isomap, and LTSA, respectively, for $k=12$. Both GP
and PSA, like the other algorithms, perform the embedding, although
the assumption of isometry does not hold for the hemisphere. }
\end{center}
\end{figure}

\begin{figure}[!t]
\begin{center}
\includegraphics{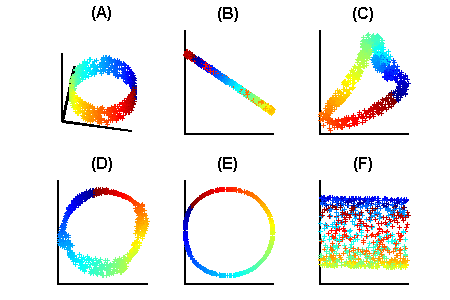}
\caption{The input of an $800$-point sample taken from a cylinder is
presented in~(A). (B)-(F) show the output of GP, PSA, LLE, Isomap,
and LTSA, respectively, for $k=12$. Note that the cylinder has no
embedding in $\R^2$ and it is not clear what is the best embedding
in this case. While PSA, Isomap, and LLE succeeded in finding the
topological ring structure of the cylinder, only PSA and LLE succeed
in preserving the width of the cylinder. GP and LTSA collapse the
cylinder and therefore fail to find the global structure, though
they preform well for most of the neighborhoods (see
Table~\ref{tb:compare}).} \label{fig:cylinder}
\end{center}
\end{figure}

\begin{figure}[!t]
\begin{center}
\includegraphics{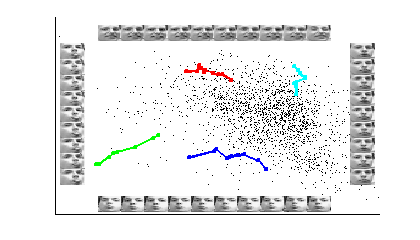}
\caption{The projection of the three-dimensional output, as computed
by PSA, on the first two coordinates (small points). The input used
was a $1965$-point sample of grayscale images of faces (see
Table~\ref{tb:datasets}). The boxes connected by lines are nearby
points in the output set. The images are the corresponding face
images from the input, in the same order. We see that nearby images
in the input space correspond to nearby points in the output
space.}\label{fig:faces}
\end{center}
\end{figure}

\begin{figure}[!ht]
\begin{center}
\includegraphics{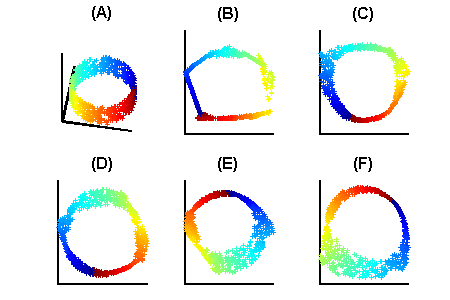}
\caption{The input of an $800$-point sample taken from a cylinder is
presented in~(A). (B)-(F) show the output of LLE for $k=6,9,12,15$
and $18$, respectively. The respective values of $R_C(X,Y)$ are
$0.25,\,  0.15,\,  0.13,\,  0.19,\, 0.17$. While qualitatively the
results are similar, $R_C(X,Y)$ indicates that $k=12$ is optimal.}
\label{fig:k_LLE}
\end{center}
\end{figure}

\section{Discussion}

In this section, we emphasize the main results of this work and
indicate possible directions for future research.

We demonstrated that overall, the measure $R(X,Y)$ provides a good
estimation of the quality of the embedding. It allows a
quantitative comparison of the outputs of various embedding
algorithms. Moreover, it is quickly and easily computed. However,
two points should be noted.

First, $R(X,Y)$ measures only the local quality of the embedding.
As emphasized in Fig.~\ref{fig:cylinder}, even outputs that do not
preserve the global structure of the input may yield relatively
low $R$-values if the local neighborhood structure is generally
preserved. This problem is shared by all manifold-embedding
techniques that try to minimize only local attributes of the data.
The problem can be circumvented by adding a penalty for outputs
that embed distant geodesic points close to each other. Distant
geodesic points can be defined, for example, as points at least
$s$-distant on the neighborhood graph, with $s>1$.

Second, $R(X,Y)$ is not an ideal measure of the quality of embedding
for algorithms that normalize their output, such as LLE~\citep{LLE},
Laplacian Eigenmap~\citep{belkin}, and LTSA~\citep{LTSA}. This is
because normalization of the output distorts the structure of the
local neighborhoods and therefore yields high $R$-values even if the
output seems to find the underlying structure of the input. This
point \citep[see also discussion
in][Section~2]{ShaExtensionSpectralMethods} raises the questions,
which qualities are preserved by these algorithms and how can one
quantify these qualities. However, it is clear that these algorithms
do not perform \emph{faithful} embedding in the sense defined in
Section~\ref{sec:Faithful}. The measure $R_C(X,Y)$ addresses this
problem to some degree, by allowing separate scaling of each
neighborhood (see Table~\ref{tb:compare}). One could consider an
even more relaxed measure which allows not only rotation,
translations and scaling but a general linear transformation of each
neighborhood. However, it is not clear what exactly such measure
would quantify. Two new embedding algorithms were introduced. We
discuss some aspects of these algorithms below.

PSA, in the form we suggested in this work, uses SA to align the
tangent subspaces at all points. While PSA works reasonably well for
small input sets and low output dimension spaces, it is not suitable
for large data sets. However, the algorithm should not be rejected
as a whole. Rather, a different or modified technique for subspaces
alignment, for example the use of landmarks~\citep{cIsomap}, is
required in order to make this algorithm truly useful.

GP is very fast ($\OO(n)$ where $n$ is the number of sample
points), can work on very large input sets (even $100,000$ in less
than an hour), and obtains good results as shown both in
Figs.~\ref{fig:swissroll}-\ref{fig:faces} and in
Table~\ref{tb:compare}. This algorithm is therefore an efficient
alternative to the existing algorithms. It may also be used to
choose optimal parameters, such as neighborhood size and output
dimension, before other algorithms are applied. $R(X,Y)$ can be
used for the comparison of GP outputs for varied parameters.

An important issue that was not considered in depth in this paper is
that of noisy input data. The main problem with noisy data is that,
locally, the data seems $q$-dimensional, even if the manifold is
$d$-dimensional, $d<q$. To overcome this problem, one should choose
neighborhoods that are large relative to the magnitude of the noise,
but not too large with respect to the curvature of the manifold. If
the neighborhood size is chosen wisely, both PSA and GP should
overcome the noise and perform the embedding well (see
Fig.~\ref{fig:noise}). This is due to the fact that both algorithms
are based on Procrustes analysis and PCA, which are relatively
robust against noise. Further study is required to define a method
for choosing the optimal neighborhood size.

\begin{figure}[!ht]
\begin{center}
\includegraphics{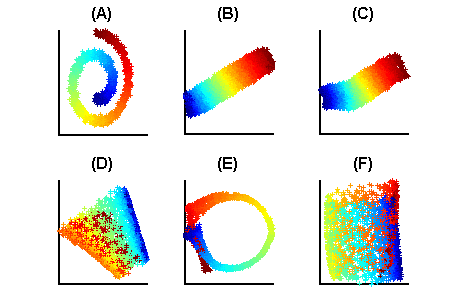}
\caption{The profile of noisy input of a $2500$-point sample taken
from a swissroll is presented in~(A). (B)-(F) show the output of GP,
PSA, LLE, Isomap and LTSA respectively. Note that only GP and PSA
succeed to find the structure of the swissroll.}\label{fig:noise}
\end{center}
\end{figure}


\appendix
\section{Proofs}

\subsection{Proof of Theorem~\ref{thm:R convergence}}\label{sec:proof1}
In this section, we denote the points of neighborhood $X_i$ as
$x_{i_{1}},\ldots ,x_{i_{k(i)}}$, where $k(i)$ is the number of
neighbors in $X_i$.
\begin{proof}
In order to prove that $R(X,Z)=\frac{1}{n}\sum_{i=1}^n G(X_i,Z_i)$
is $\OO(r_{\max}^4)$, it is enough to show that for each $i\in
1,\ldots , n$, $G(X_i,Z_i)=\OO(r_i^4)$, where $r_i$ is the radius of
the $i$-th neighborhood. The proof consists of replacing the
Procrustes rotation matrix $A_i$ by $J_i\equiv J_{\phi}(z_i)$, the
Jacobian of the mapping $\phi$ at $z_i$. Note that the fact that
$\phi$ is an isometry ensures that $J_i'J_i=I$. The Procrustes
translation vector $b_i$ is replaced by $x_i-J_i z_i$.

Recall that by definition $x_j-x_i=\phi(z_j)-\phi(z_i)$; therefore
$x_j-x_i =J_i (z_i-z_j)+\OO\left(\norm{z_j-z_i}^2\right)$. Hence,
\begin{eqnarray}\label{eq:proofR}
G(X_i,Z_i) &=&\inf_{A_i,b_i} \sum_{j=1}^{k(i)} \norm{x_{i_j} -A_i z_{i_j} -b_i}^2\\
&\leq & \sum_{j=1}^{k(i)} \norm{x_{i_j} -J_iz_{i_j} -(x_i-J_i z_i)}^2 \nonumber\\
&=& \sum_{j=1}^{k(i)} \norm{(x_{i_j}-x_i) -J_i(z_{i_j}-z_i)}^2\nonumber\\
&=& \sum_{j=1}^{k(i)}
\OO\left(\norm{z_{i_j}-z_i}^4\right)\,.\nonumber
\end{eqnarray}

$\phi$ is an isometry, therefore
$d_{\mathcal{M}}(x_{i_j},x_i)=\norm{z_{i_j}-z_i}$, where
$d_{\mathcal{M}}$ is the geodesic metric. The sample is assumed to
be dense, hence $\norm{x_{i_j}-x_i} < s_0$, where $s_0$ is the
\emph{minimum branch separation} (see Section~\ref{sec:problem}).
Using~\citet[Lemma~3]{IsoMapConvergence} we conclude that
\begin{equation*}
\norm{z_{i_j}-z_i}=d_{\mathcal{M}}(x_{i_j},x_i)
<\frac{\pi}{2}\norm{x_{i_j}-x_i}\,.
\end{equation*}
We can therefore write $\OO\left(\norm{z_{i_j}-z_i}^4\right)=
\OO(r_i^4)$, which completes the proof.
\end{proof}

\subsection{Proof of Theorem~\ref{thm:RPCA
convergence}}\label{sec:proof2} The proof of Theorem~\ref{thm:RPCA
convergence} is based on the idea that the PCA projection matrix
$P_i$ is usually a good approximation of the span of the Jacobian
$J_i$. The structure of the proof is as follows. First we quantify
the relations between $X_i P_i$ and $X_i J_i$, the projections of
the $i$-th neighborhood using the PCA projection matrix and the
Jacobian, respectively. Then we follow the lines of the proof of
Theorem~\ref{thm:R convergence}, using the bounds obtained
previously.

To compare the PCA projection subspace and tangent subspace at $x_i$
we use the notation of angle between subspaces. Note that both
subspaces are $d$-dimensional and are embedded in the Euclidian
space $\R^q$. The columns of the matrices $P_i$ and $J_i$ consist of
orthonormal bases of the PCA projection space and of the tangent
space, respectively. Denote these subspaces by $\mathcal{P}_i$ and
$\mathcal{J}_i$, respectively. Surprisingly, the angle between
$\mathcal{P}_i$ and $\mathcal{J}_i$ can be arbitrarily large.
However, in the following we show that even if the angle between the
subspaces is large, the projected neighborhoods are close.

We start with some definitions. The \emph{principal angles}
$\sigma_1,\ldots,\sigma_d$ between $\mathcal{J}_i$ and
$\mathcal{P}_i$ may be defined recursively for $p = 1,\dots, d$
as~\citep[see][]{MatrixComputations}
\begin{equation*}
    \cos(\sigma_p) = \max_{v\in \mathcal{P}_i} \max_{w\in \mathcal{J}_i} v'w \,,
\end{equation*}
subject to
\begin{equation*}
\norm{v}=\norm{w}=1,\; v'v_k=0,w'w_k=0\;; k=1,\ldots,p-1\,.
\end{equation*}
The vectors $v_1, \ldots , v_d$ and $w_1,\ldots,w_d$ are called
\emph{principal vectors}.

The fact that $P_i$ and $J_i$ have orthogonal columns leads to a
simple way to calculate  the principal vectors and angles
explicitly. Let $ULV'$ be the svd of $J_i' P_i$.
Then~\citep[see][]{MatrixComputations}
\begin{enumerate}
    \item $v_1, \ldots , v_d$ are given by the columns of $P_iV$.
    \item $w_1, \ldots , w_d$ are given by the columns of $J_iU$.
\end{enumerate}

The relations between the two sets of vectors plays an important
role in our computations. Write $w_p= \ap v_p + \bp \vp$, where
$\ap= w_p 'v_p$, $\bp =\norm{w_p-\ap v_p}$ and $\vp=\frac{w_p-\ap
v_p}{\norm{w_p-\ap v_p}}$. Note that $\ap$ is the cosine of the
$p$-th principal angle between $\mathcal{P}_i$ and $\mathcal{J}_i$.
The angle between the subspaces is defined as $ \arccos(\alpha_d)$
and the distance between the two subspaces is defined to be
$\sin(\alpha_d)$.

We now prove some basic claims related to the principal vectors.

\begin{lemma}\label{lem:principalVectors}
Let $P_i$ be the projection matrix of the neighborhood $X_i$ and let
$J_i$ be the Jacobian of $\phi$ at $z_i$. Let $ULV'$ be the svd of
$J_i' P_i$ and $v_1, \ldots , v_d$ and $w_1,\ldots,w_d$ be the
columns of $P_iV$ and $J_iU$, respectively. Then
\begin{enumerate}
    \item $v_1,\ldots,v_d$ are an orthonormal vector system.
    \item $w_1,\ldots,w_d$ are an orthonormal vector system.
    \item $v_p \perp w_q$ for $p \neq q$.
    \item $\vp \perp v_q^\perp$ for $p \neq q$.
    \item $\vp \perp v_q$ for $q=1,\ldots,d$.
\end{enumerate}
\end{lemma}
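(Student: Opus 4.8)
The plan is to reduce everything to a single observation: in the bases supplied by the singular-vector matrices $U$ and $V$, the cross-Gram matrix $J_i'P_i$ becomes diagonal. Write $\tilde V = P_iV$ and $\tilde W = J_iU$ for the $q\times d$ matrices whose columns are $v_1,\dots,v_d$ and $w_1,\dots,w_d$, respectively. Since $P_i$ has orthonormal columns, $J_i$ has orthonormal columns (because $\phi$ is an isometry, so $J_i'J_i=I$), and $U,V$ are orthogonal, items~1 and~2 are immediate: $\tilde V'\tilde V = V'P_i'P_iV = V'V = I$ and likewise $\tilde W'\tilde W = U'J_i'J_iU = I$, so in each case the columns form an orthonormal system.

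For item~3 I would compute the mixed Gram matrix directly from the svd: $\tilde W'\tilde V = U'J_i'P_iV = U'(ULV')V = L$, using $U'U=V'V=I$. Hence $w_p'v_q = L_{pq}$, which is $0$ whenever $p\neq q$ (this is exactly item~3) and equals the $p$-th singular value $\ell_p\ge 0$ when $p=q$. In particular $\alpha_p = w_p'v_p = \ell_p$; it is worth recording this identification (it says $\alpha_p$ is the cosine of the $p$-th principal angle), since it is reused in the subsequent lemmas.

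Items~4 and~5 are then bookkeeping with $v_p^\perp \propto w_p - \alpha_p v_p$, under the standing assumption $\beta_p = \norm{w_p-\alpha_p v_p}>0$ that makes $v_p^\perp$ well defined (if $\beta_p=0$ the corresponding assertion is vacuous, or one may take $v_p^\perp$ to be any unit vector orthogonal to $\mathrm{span}\{v_1,\dots,v_d\}$). For item~5, expand $(w_p-\alpha_p v_p)'v_q = w_p'v_q - \alpha_p\, v_p'v_q$: when $q\neq p$ both terms vanish by items~3 and~1, and when $q=p$ it equals $\alpha_p-\alpha_p=0$; hence $v_p^\perp\perp v_q$ for every $q$. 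For item~4 with $p\neq q$, expand $(w_p-\alpha_p v_p)'(w_q-\alpha_q v_q)$ into the four terms $w_p'w_q$, $w_p'v_q$, $v_p'w_q$, $v_p'v_q$, which vanish by items~2,~3,~3,~1 respectively, so $v_p^\perp\perp v_q^\perp$.

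I do not expect a genuine obstacle here; the proof is essentially the remark that rotating the two orthonormal bases by the right singular vectors simultaneously brings the overlap matrix to diagonal form. The only points needing care are flagging the degenerate case $\beta_p=0$ and keeping straight, at each step of items~4 and~5, which orthogonality relation (among the $v$'s, among the $w$'s, or the mixed one from item~3) is being invoked.
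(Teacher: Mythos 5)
Your proof is correct and follows essentially the same route as the paper's: items 1--2 from the orthonormality of the columns of $P_iV$ and $J_iU$, item 3 from $(J_iU)'(P_iV)=U'(J_i'P_i)V=L$ being diagonal, and items 4--5 by expanding $w_p-\alpha_p v_p$ and invoking the previous orthogonality relations. Your added remarks (the identification $\alpha_p=\ell_p$ and the degenerate case $\beta_p=0$) are careful touches the paper omits, but they do not change the argument.
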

\begin{proof}${}$
\begin{enumerate}
    \item True, since $P_i V$  is an orthonormal matrix.
    \item True, since $J_i U$ is an orthonormal matrix.
    \item Note that $(J_i U)'(P_i V)=U'(J_i ' P_i)V=L$ where $L$ is
    a diagonal non-negative matrix.
    \item \begin{eqnarray*}
      (\bp \vp)'(\beta_q v_q^{\perp}) &=& (w_p -\ap v_p)'(w_q -\ap v_q) \\
       &=& w_p'w_q -  v_p{}'w_q -v_q'w_p +v_p{}'v_q =0\,.
    \end{eqnarray*}
    \item\begin{eqnarray*}
      (\bp\vp)' v_q &=& (w_p -\ap v_p)'v_q\\
      &=& w_p'v_q-\ap v_p{}'v_q \\
       &=& \delta_{pq}\ap-\delta_{pq}\ap=0\,.
    \end{eqnarray*}
\end{enumerate}
\end{proof}

Using the relation between the principal vectors, we can compare the
description of the neighborhood $X_i$ in the local PCA coordinations
and its description in the tangent space coordinations. Here we need
to exploit two main facts. The first fact is that the local PCA
projection of a neighborhood is the best approximation, in the $l_2$
sense, to the original neighborhood. Specifically, it is a better
approximation than the tangent space in the $l_2$ sense. The second
is that in a small neighborhood of $x_i$, the tangent space itself
is a good approximation to the original neighborhood. Formally, the
first assertion means that
\begin{equation}\label{eq:XgeqPgeqJ}
    \sjk \norm{\xj}^2 \geq \sjk \norm{P_i'\xj}^2  \geq \sjk \norm{J_i'\xj}^2
\end{equation}
while the second assertion means that
\begin{equation}\label{eq:XminusJisO4}
    \sjk \norm{\xj}^2 -\sjk \norm{J_i'\xj}^2= \OO(r_i^4)\,.
\end{equation}

The proof of Eq.~\ref{eq:XminusJisO4} is straightforward. First note
that
\begin{eqnarray*} \label{eqn:relation xj and sj}
  \xj  &=& (x_{i_j}-x_i)-(\bar{x}_i-x_i) \\
   &=&   J_i(z_{i_j}-z_i)-J_i(\bar{z}_i-z_i)+\OO(r_i^2)\\
   &=& J_i \zj + \OO(r_i^2)\,.
\end{eqnarray*}
Hence
\begin{eqnarray*}
  \norm{\xj}^2 -\norm{J_i'\xj}^2& =& \sum_{p=d+1}^q (w_p'\xj)^2 \\
   &=&  \norm{\xj -J_iJ_i'\xj}^2\\
   &=& \norm{J_i \zj -J_iJ_i'(J_i \zj)+ \OO(r_i^2)}^2\\
   &=&\norm{\OO(r_i^2)}^2=\OO(r_i^4)\,,
\end{eqnarray*}
where $w_{d+1},\ldots,w_q$ are a completion of  $w_1,\ldots,w_d$ to
an orthonormal basis of $\R^q$ and we used the fact that $J_i
'J_i=I$.

The following is a lemma regarding the relations between the PCA
projection matrix and the Jacobian projection. It is a consequence
of Eq.~\ref{eq:XgeqPgeqJ}.
\begin{lemma}\label{lem: xj geq p geq j} ${}$
\begin{enumerate}
    \item $\sjk \norm{\xj}^2- \sjk \norm{V'P_i'\xj}^2=\OO(r_i^4)$.
    \item $\sjk \norm{V'P_i'\xj}^2 -\sjk \norm{U'J_i'\xj}^2=\OO(r_i^4)$.
    \item $\xj'v_p=\OO(r_i)$.
    \item $\xj'\vp=\OO(r_i^2)$.
\end{enumerate}
\end{lemma}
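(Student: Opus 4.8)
The plan is to obtain all four items from Eq.~\ref{eq:XgeqPgeqJ} and Eq.~\ref{eq:XminusJisO4}, using only the orthogonality facts already recorded in Lemma~\ref{lem:principalVectors}. The observation that makes items (1) and (2) painless is that $U$ and $V$ are $d\times d$ orthogonal matrices, so $\norm{V'P_i'\xj}=\norm{P_i'\xj}$ and $\norm{U'J_i'\xj}=\norm{J_i'\xj}$; thus (1) and (2) are really statements about the three sums $\sjk\norm{\xj}^2$, $\sjk\norm{P_i'\xj}^2$, and $\sjk\norm{J_i'\xj}^2$, which are ordered by Eq.~\ref{eq:XgeqPgeqJ}.

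For item (1) I would note that $\sjk\norm{\xj}^2-\sjk\norm{P_i'\xj}^2$ is nonnegative by the left inequality of Eq.~\ref{eq:XgeqPgeqJ} and is at most $\sjk\norm{\xj}^2-\sjk\norm{J_i'\xj}^2$ by the right inequality; the latter is $\OO(r_i^4)$ by Eq.~\ref{eq:XminusJisO4}, so the former is $\OO(r_i^4)$ too. Item (2) is the same sandwich: $\sjk\norm{P_i'\xj}^2-\sjk\norm{J_i'\xj}^2\ge 0$ by Eq.~\ref{eq:XgeqPgeqJ} and is bounded above by that same $\OO(r_i^4)$ quantity. Item (3) is just Cauchy--Schwarz: $\abs{\xj'v_p}\le\norm{\xj}\,\norm{v_p}=\norm{\xj}$ since $v_p$ is a unit vector, and $\norm{\xj}=\norm{x_{i_j}-\bar x_i}\le\norm{x_{i_j}-x_i}+\norm{x_i-\bar x_i}\le 2r_i=\OO(r_i)$ because $\bar x_i$ is an average of points lying within $r_i$ of $x_i$.

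Item (4) is the only one that needs a genuine idea. The naive route --- write $\xj=J_i\zj+\OO(r_i^2)$ and expand $J_i\zj$ in the orthonormal tangent basis $w_1,\dots,w_d$ --- produces a term proportional to $\bp\norm{\zj}=\OO(\bp r_i)$, which is only $\OO(r_i)$, since the principal angle $\arccos(\ap)$ between $\mathcal{P}_i$ and $\mathcal{J}_i$ (hence $\bp$) need not be small. Instead I would project onto the \emph{PCA} subspace: since $\vp\perp v_q$ for every $q=1,\dots,d$ (item~5 of Lemma~\ref{lem:principalVectors}), $\vp\in\mathcal{P}_i^{\perp}$, so $P_iP_i'\xj\perp\vp$ and therefore $\xj'\vp=\big((I-P_iP_i')\xj\big)'\vp$. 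By Cauchy--Schwarz and $\norm{\vp}=1$ this gives $\abs{\xj'\vp}\le\norm{(I-P_iP_i')\xj}$. Finally $\norm{(I-P_iP_i')\xj}^2=\norm{\xj}^2-\norm{P_i'\xj}^2$, and by item~(1) the sum over $j$ of these nonnegative numbers is $\OO(r_i^4)$, so each one is $\OO(r_i^4)$; hence $\norm{(I-P_iP_i')\xj}=\OO(r_i^2)$ and $\xj'\vp=\OO(r_i^2)$.

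The main obstacle therefore lies entirely in item~(4), and it is conceptual rather than computational: one must resist expanding $\xj$ in the tangent basis (where the uncontrolled factor $\bp$ appears) and instead use the PCA subspace, which by construction approximates the neighborhood optimally --- that is exactly what upgrades the $\OO(r_i)$ estimate to $\OO(r_i^2)$. (Implicit in item~(4) is the assumption $\bp>0$ so that $\vp$ is defined; if $\bp=0$ then $v_p\in\mathcal{J}_i$ and there is nothing to prove.)
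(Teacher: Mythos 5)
Your proof is correct and follows essentially the same route as the paper's: items (1) and (2) by sandwiching between Eq.~\ref{eq:XgeqPgeqJ} and Eq.~\ref{eq:XminusJisO4} (using that $U$ and $V$ are $d\times d$ orthogonal), item (3) from the definition of $r_i$, and item (4) by exploiting $\vp\perp\mathcal{P}_i$ so that $(\vp{}'\xj)^2$ is dominated by the PCA residual $\norm{\xj}^2-\norm{V'P_i'\xj}^2$, whose sum over $j$ is $\OO(r_i^4)$ by item (1). Your write-up merely makes explicit the nonnegativity/term-by-term step that the paper leaves implicit.
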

\begin{proof}${}$\\
\begin{enumerate}
    \item \addtocounter{enumi}{1}and 2. follow from Eqs.~\ref{eq:XgeqPgeqJ} and~\ref{eq:XminusJisO4}.
    \item follows from the definition of $r_i$.
    \item is a consequence of 1., indeed,
\begin{eqnarray*}
 \sjk \spd \left(\vp {}'\xj \right)^2  &\leq & \sjk \norm{\xj}^2 -\sjk \norm{V'P_i '\xj}^2\\
& = &\OO(r_i^4 )\,.
\end{eqnarray*}
\end{enumerate}
\end{proof}

We now prove Theorem~\ref{thm:RPCA convergence}. Similarly to the
proof of Theorem~\ref{thm:R convergence}, it is enough to show that
$G(X_i P_i,Z_i)=\OO(r_i^3)$.
\begin{eqnarray*}
  G(X_i P_i,Z_i) &=& \inf_{A_i,b_i} \sjk \norm{P_i' x_{i_j} -A_i z_{i_j} -b_i}^2 \\
&\leq & \sjk \norm{P_i' \xj - O_i \zj}^2 \\
   &= & \sjk \norm{P_i' \xj - O_i J_i' \xj +O_i J_i'\xj -O_i \zj}^2 \\
   &\leq & \sjk \norm{P_i' \xj -O_i J_i' \xj }^2 + \sjk \norm{O_i J_i'\xj -O_i \zj}^2\\
    &\equiv& \mathrm{Exp1}+ \mathrm{Exp2}\,,
\end{eqnarray*}
where $O_i$ is some $d \times d$ orthogonal matrix. Note that due to
its orthogonality, Exp2 is independent of the specific choice of
$O_i$.

We choose $O_i=VU'$. Rewriting $\mathrm{Exp1}$ we obtain
\begin{eqnarray*}
   \mathrm{Exp1}&=&\sjk \norm{P_i' \xj -O_i  J_i'\xj }^2  \\
   &=&\sjk \norm{P_i' \xj -VU' J_i'\xj }^2  \\
   &=&\sjk \norm{V'P_i' \xj -(V'V)U' J_i'\xj }^2  \\
   &=&\sjk \spd \left(v_p{}'\xj-w_p '\xj\right)^2 \,.
\end{eqnarray*}
This last expression brings out the difference between the
description of the neighborhood $X_i$ in the local PCA coordinations
and its description in the tangent space coordinates. Using
Lemma~\ref{lem:principalVectors}, we can write
\begin{eqnarray}\label{eq:Ex1in3parts}
   \mathrm{Exp1}&=&\sjk \spd \big((v_p -w_p)'\xj\big)^2\\
   &=& \sjk\spd \big(v_p-\ap v_p-\bp \vp)'\xj\big)^2 \nonumber\\
   &=& \sjk\spd\ (1-\ap)^2\big(v_p{}'\xj \big)^2 \nonumber \\
   &&  - \sjk \spd 2(1-\ap)\bp\big(v_p{}'\xj\big)\big(\vp{}'\xj\big) \nonumber\\
   && + \sjk \spd \bp^2\big(\vp{}'\xj\big)^2\,. \nonumber
\end{eqnarray}
We will use Lemma~\ref{lem: xj geq p geq j} to bound the first
expression of the RHS.
\begin{eqnarray*}
  \OO(r_i^4) &=& \sjk \norm{V'P_i'\xj}^2 -\sjk \norm{U'J_i'\xj}^2 \\
  &=& \sjk\spd \Big\{\big(v_p{}'\xj\big)^2-\big((\ap v_p+\bp \vp)'\xj\big)^2\Big\} \\
  &=& \sjk \spd (1-\ap^2)\big(v_p{}'\xj\big)^2\\
  &&+ 2\sjk\spd\ap\bp \big(v_p{}'\xj\big)\big(\vp{}'\xj\big)\\
  && -\sjk\spd \bp^2 \big(\vp{}'\xj\big)^2\,.
\end{eqnarray*}
Note also that $(1-\ap)^2\leq 1- \ap^2$. Hence,
\begin{eqnarray*}
  \sjk\spd\ (1-\ap)^2\big(v_p{}'\xj \big)^2  &\leq& \sjk\spd \bp^2 \big(\vp{}'\xj\big)^2  \\
  &&- 2\sjk\spd\ap\bp \big(v_p{}'\xj\big)\big(\vp{}'\xj\big)\\
  && +\OO(r_i^4)\,.
\end{eqnarray*}
Plugging it into Eq.~\ref{eq:Ex1in3parts} we get
\begin{eqnarray*}
   \mathrm{Exp1}   &\leq&  \sjk \spd 2\bp^2 \big(\vp{}'\xj\big)^2\\
   && - \sjk \spd 2\bp \big(v_p{}'\xj\big)\big(\vp{}'\xj\big)   +\OO(r_i^4)\\
   &\leq& \OO(r_i^4)+\OO(r_i^3)+\OO(r_i^4)=\OO(r_i^3)\,,
\end{eqnarray*}
where the last inequality is due to Lemma~\ref{lem: xj geq p geq j}.

Proving that Exp2 is $\OO(r_i^4)$ is straightforward.
\begin{eqnarray*}
 \mathrm{Exp2} &=&\sjk \norm{O_i J_i'\xj -O_i \zj}^2  \\
   &=&  \sjk \norm{J_i'\xj - \zj}^2 =\OO(r_i^4)\,,
\end{eqnarray*}
which concludes the proof of Theorem~\ref{thm:RPCA convergence}.

\subsection{Proof of Theorem~\ref{thm:RC convergence}}\label{sec:proof3}
The proof is similar to the proof of Theorem~\ref{thm:R convergence}
(see Section~\ref{sec:proof1}). The proof consists of replacing the
Procrustes rotation matrix $A_i$ and the constant $c_i$ by
$J_i\equiv J_{\tilde{\phi}}(z_i)$, the Jacobian of the mapping
$\tilde{\phi}$ at $z_i$. Note that as $\tilde{\phi}$ is an conformal
mapping which ensures that $J_i'J_i=cI$. The Procrustes translation
vector $b_i$ is replaced by $x_i-J_i z_i$.

Recall that by definition
$x_j-x_i=\tilde{\phi}(z_j)-\tilde{\phi}(z_i)$; therefore $x_j-x_i
=J_i (z_i-z_j)+\OO\left(\norm{z_j-z_i}^2\right)$. Hence,
\begin{eqnarray}\label{eq:proofRC}
G(X_i,Z_i) &=&\inf_{A_i,b_i,c_i} \sum_{j=1}^{k(i)} \norm{x_{i_j} -c_i A_i z_{i_j} -b_i}^2\\
&\leq & \sum_{j=1}^{k(i)} \norm{x_{i_j} -J_iz_{i_j} -(x_i-J_i z_i)}^2 \nonumber\\
&=& \sum_{j=1}^{k(i)}
\OO\left(\norm{z_{i_j}-z_i}^4\right)\,.\nonumber
\end{eqnarray}

As $\tilde{\phi}$ is an conformal mapping, we have $
c_{\min}\norm{z_{i_j}-z_i} \leq d_{\mathcal{M}}(x_{i_j},x_i)$, where
$d_{\mathcal{M}}$ is the geodesic metric and $c_{\min}>0$ is the
minimum of the scale function $c(z)$ measures the scaling change of
$\phi$ at $z$ . The minimum $c_{\min}$ is attained as $\mathcal{D}$
is compact. The last inequality holds true since the geodesic
distance $d_{\mathcal{M}}(x_{i_j},x_i)$ equals to the integral over
$c(z)$ for some path between $z_{i_j}$ and $z_i$.

The sample is assumed to be dense, hence $\norm{x_{i_j}-x_i} < s_0$,
where $s_0$ is the \emph{minimum branch separation} (see
Section~\ref{sec:problem}). Using
again~\citet[Lemma~3]{IsoMapConvergence} we conclude that
\begin{equation*}
\norm{z_{i_j}-z_i}\leq
\frac{1}{c_{\min}}d_{\mathcal{M}}(x_{i_j},x_i)
<\frac{\pi}{2c_{\min}}\norm{x_{i_j}-x_i}\,.
\end{equation*}
We can therefore write $\OO\left(\norm{z_{i_j}-z_i}^4\right)=
\OO(r_i^4)$. Dividing by the normalization $\norm{HX_i}_F^2 $ for
each neighborhood we obtain $R_{C}(X,Y)=\OO(r_{\max}^2)$ which
completes the proof.

\begin{acknowledgements}
We would like to thank S. Kirkpatrick  and J. Goldberger for
meaningful discussions. We are grateful to the anonymous reviewers
of an earlier version of this manuscript for their helpful
suggestions.
\end{acknowledgements}


\end{document}